\documentclass{article}

\newif\ifinappendix
\inappendixfalse

\usepackage{amsfonts}
\usepackage{amsmath}
\usepackage{amssymb}
\usepackage{amsthm}
\usepackage{appendix}
\usepackage{color}
\usepackage{float}
\usepackage{graphicx}
\usepackage{hyperref}
\usepackage{ltxtable}
\usepackage{mathtools}
\usepackage{mdwlist}
\usepackage{natbib,natbibspacing}
\usepackage{nicefrac}
\usepackage{subfigure}
\usepackage{url}
\usepackage{verbatim}
\usepackage{xcolor}
\usepackage{algorithm}    
\usepackage{algorithmic}

\makeatletter
\newtheorem*{rep@theorem}{\rep@title}
\newcommand{\newreptheorem}[2]{\newenvironment{rep#1}[1]{\def\rep@title{#2 \ref{##1}}\begin{rep@theorem}}{\end{rep@theorem}}}
\makeatother

\newtheorem{theorem}{Theorem}
\newreptheorem{theorem}{Theorem}

\newreptheorem{lemma}{Lemma}
\newtheorem{corollary}{Corollary}
\newreptheorem{corollary}{Corollary}

\theoremstyle{plain}
\newtheorem{thm}{Theorem}[section]
\newtheorem{lem}[thm]{Lemma}

\theoremstyle{definition}

\theoremstyle{remark}


\theoremstyle{plain}
\newreptheorem{thm}{Theorem}
\newreptheorem{lem}{Lemma}
\newreptheorem{cla}{Claim}
\newreptheorem{prop}{Proposition}
\theoremstyle{definition}
\newreptheorem{defn}{Definition}
\newreptheorem{conj}{Conjecture}
\newreptheorem{exmp}{Example}
\theoremstyle{remark}
\newreptheorem{case}{Case}

\numberwithin{equation}{section}

\newcolumntype{L}{>{\centering\arraybackslash} m{0.04\textwidth}}
\newcolumntype{S}{>{\centering\arraybackslash} m{0.32\textwidth}}

\newcommand{\todo}[1]{ \textcolor{red}{TODO: #1} }

\newcommand{\probability}[2][]{\mathrm{Pr}_{#1}\left\{#2\right\}}
\newcommand{\expectation}[2][]{\mathbb{E}_{#1}\left[#2\right]}

\newcommand{\argmin}[1]{\underset{#1}{\mathrm{argmin}} \:}
\newcommand{\argmax}[1]{\underset{#1}{\mathrm{argmax}} \:}
\newcommand{\inner}[2]{\left\langle {#1}, {#2} \right\rangle}
\newcommand{\norm}[1]{\left\lVert {#1} \right\rVert}

\newcommand{\abs}[1]{\left\lvert {#1} \right\rvert}
\DeclareMathOperator{\sign}{sign}

\newcommand{\code}[1]{\mbox{\texttt{#1}}}


\usepackage[accepted]{icml2012}

\icmltitlerunning{The Kernelized Stochastic Batch Perceptron}

\begin{document}

\twocolumn[

\icmltitle{The Kernelized Stochastic Batch Perceptron}

\icmlauthor{Andrew Cotter}{cotter@ttic.edu}
\icmladdress{Toyota Technological Institute at Chicago \
            6045 S. Kenwood Ave., Chicago, IL 60637 USA}
\icmlauthor{Shai Shalev-Shwartz}{shais@cs.huji.ac.il}
\icmladdress{John S. Cohen SL in CS, 
            The Hebrew University of Jerusalem, Israel}
\icmlauthor{Nathan Srebro}{nati@ttic.edu}
\icmladdress{Toyota Technological Institute at Chicago \
            6045 S. Kenwood Ave., Chicago, IL 60637 USA}

\icmlkeywords{machine learning, ICML}

\vskip 0.3in

]

\begin{abstract}
We present a novel approach for training kernel Support Vector
Machines, establish learning runtime guarantees for our method that
are better then those of any other known kernelized SVM optimization
approach, and show that our method works well in practice compared
to existing alternatives.
\end{abstract}

\section{Introduction}\label{sec:introduction}

We present a novel algorithm for training kernel Support Vector Machines
(SVMs).
%
%
One may view a SVM as the bi-criterion optimization problem of seeking a
predictor with large margin (low norm) on the one hand, and small training
error on the other. Our approach is a stochastic gradient
method on a non-standard scalarization of this bi-criterion problem. In
particular, we use the ``slack constrained'' scalarized optimization problem
introduced by \citet{HazanKoSr11} where we seek to maximize the classification
margin, subject to a constraint on the total amount of ``slack'', i.e.~sum of
the violations of this margin. Our approach is based on an efficient method for
computing unbiased gradient estimates on the objective. Our algorithm can be
seen as a generalization of the ``Batch Perceptron'' to the non-separable case
(i.e.~when errors are allowed), made possible by introducing stochasticity, and
we therefore refer to it as the ``Stochastic Batch Perceptron'' (SBP).
%
%

The SBP is fundamentally different from Pegasos \cite{ShalevSiSrCo10} and other
stochastic gradient approaches to the problem of training SVMs, in that calculating
each stochastic gradient estimate still requires considering the {\em entire}
data set. In this regard, despite its stochasticity, the SBP is very much a
``batch'' rather than ``online'' algorithm. For a linear SVM, each iteration
would require runtime linear in the training set size, resulting in an
unacceptable overall runtime. However, in the kernel setting, essentially all
known approaches already require linear runtime per iteration. A more careful
analysis reveals the benefits of the SBP over previous kernel SVM optimization
algorithms.
%
%

In order to compare the SBP runtime to the runtime of other SVM optimization
algorithms, which typically work on different scalarizations of the
bi-criterion problem, we follow \citet{BottouBo08,ShalevSr08} and compare the
runtimes required to ensure a generalization error of $\mathcal{L}^*+\epsilon$,
assuming the existence of some unknown predictor $u$ with norm $\norm{u}$ and
expected hinge loss $\mathcal{L}^*$. The main advantage of the SBP is in the
regime in which $\epsilon = \Omega(\mathcal{L}^*)$, i.e.~we seek a constant
factor approximation to the best achievable error (e.g.~we would like an error
of $1.01 \mathcal{L}^*$). In this regime, the overall SBP runtime is
$\norm{u}^4/\epsilon$, compared with $\norm{u}^4/\epsilon^3$ for Pegasos and
$\norm{u}^4/\epsilon^2$ for the best known dual decomposition approach.

\section{Setup and Formulations}

Training a SVM amounts to finding a vector $w$ defining a classifier $x
\mapsto \sign(\inner{w}{\Phi\left(x\right)})$, that on the one hand has small
norm (corresponding to a large classification margin), and on the other has a
small training error, as measured through the average hinge loss on the
training sample: $\hat{\mathcal{L}}(w) = \frac{1}{n}\sum_{i=1}^{n}\ell\left(y_i
\inner{w}{\Phi\left(x_i\right)}\right)$, where each $\left(x_{i},y_{i}\right)$
is a labeled example, and $\ell\left(a\right)=\max\left(0,1-a\right)$ is the
hinge loss. This is captured by the following bi-criterion optimization
problem:
\begin{equation}
\label{eq:bi-criterion-objective} \min_{w\in\mathbb{R}^{d}} \;\; \norm{w}
\;\;\; , \;\;\; \hat{\mathcal{L}}(w) .
\end{equation}
We focus on \emph{kernelized} SVMs, where the feature map $\Phi(x)$ is
specified implicitly via a kernel $K\left(x,x'\right) =
\inner{\Phi\left(x\right)}{\Phi\left(x'\right)}$, and assume that $K(x,x')\leq
1$. We consider only ``black box'' access to the kernel (i.e.~our methods work
for any kernel, as long as we can compute $K(x,x')$ efficiently), and in our
runtime analysis treat kernel evaluations as requiring $O(1)$ runtime.  Since
kernel evaluations dominate the runtime of all methods studied (ours as well as
previous methods), one can also interpret the runtimes as indicating the number
of required kernel evaluations. To simplify our derivation, we often discuss
the explicit SVM, using $\Phi(x)$, and refer to the kernel only when needed.

A typical approach to the bi-criterion Problem
\ref{eq:bi-criterion-objective} is to scalarize it using a parameter
$\lambda$ controlling the tradeoff between the norm (inverse margin)
and the empirical error:
\begin{equation}
\label{eq:regularized-objective} \min_{w\in\mathbb{R}^{d}}\frac{\lambda}{2}
\norm{w}^{2} +\frac{1}{n}\sum_{i=1}^{n}\ell\left(y_i\inner{w}{\Phi\left(x_{i}\right)}\right)
\end{equation}
Different values of $\lambda$ correspond to different Pareto optimal solutions
of Problem \ref{eq:bi-criterion-objective}, and the entire Pareto front can be
explored by varying $\lambda$.

We instead consider the ``slack constrained'' scalarization \cite{HazanKoSr11},
where we maximize the ``margin'' subject to a constraint of $\nu$ on the total
allowed ``slack'', corresponding to the average error. That is, we aim at
maximizing the margin by which all points are correctly classified (i.e.~the
minimal distance between a point and the separating hyperplane), after allowing
predictions to be corrected by a total amount specified by the slack
constraint:
\begin{align}
\label{eq:slack-constrained-objective} \max_{w\in\mathbb{R}^{d}}
\max_{\xi\in\mathbb{R}^{n}} \min_{i\in\left\{1,\dots,n\right\}} & \left(
y_{i}\inner{w}{\Phi\left(x_{i}\right)}+\xi_{i} \right) \\
\nonumber \mbox{subject to: } & \norm{w} \le 1,~ \xi \succeq 0, ~ \mathbf{1}^{T}\xi\le n\nu
\end{align}
In this scalarization, varying $\nu$ explores different Pareto optimal
solutions of Problem \ref{eq:bi-criterion-objective}. This is captured by the
following Lemma, which also quantifies how suboptimal solutions of the
slack-constrained objective correspond to Pareto suboptimal points:

\medskip
\begin{lem}
\label{lem:slack-constrained-suboptimality}

\emph{\citep[Lemma 2.1]{HazanKoSr11}}
%
%
For any $u \ne 0$, consider Problem \ref{eq:slack-constrained-objective} with
$\nu = \hat{\mathcal{L}}\left(u\right)/\norm{u}$. Let $\bar{w}$ be an
$\bar{\epsilon}$-suboptimal solution to this problem with objective value
$\gamma$, and consider the rescaled solution $w=\bar{w}/\gamma$. Then:
\begin{align*}
\norm{w} \le & \frac{1}{1-\bar{\epsilon}\norm{u}}\norm{u} ~~,~~
\hat{\mathcal{L}}\left(w\right) \le  \frac{1}{1-\bar{\epsilon}\norm{u}}
\hat{\mathcal{L}}\left(u\right)
\end{align*}

\end{lem}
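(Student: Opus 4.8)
The plan is to prove both inequalities by a single rescaling argument, after first pinning down a lower bound on the optimal value of Problem~\ref{eq:slack-constrained-objective} in terms of $\norm{u}$. \emph{Lower-bounding the optimum.} I would plug in the explicit feasible point $w_0 = u/\norm{u}$ (so $\norm{w_0} \le 1$) together with the slacks $\xi_i = \ell\!\left(y_i\inner{u}{\Phi(x_i)}\right)/\norm{u}$. The choice $\nu = \hat{\mathcal{L}}(u)/\norm{u}$ is exactly what makes $\mathbf{1}^{T}\xi = \tfrac{1}{\norm{u}}\sum_i \ell\!\left(y_i\inner{u}{\Phi(x_i)}\right) = n\nu$, so $(w_0,\xi)$ is feasible. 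The one nontrivial ingredient is the elementary identity $a + \ell(a) = \max(a,1) \ge 1$; applying it with $a = y_i\inner{u}{\Phi(x_i)}$ shows every term $y_i\inner{w_0}{\Phi(x_i)} + \xi_i$ equals $\tfrac{1}{\norm{u}}\left(y_i\inner{u}{\Phi(x_i)} + \ell(y_i\inner{u}{\Phi(x_i)})\right) \ge \tfrac{1}{\norm{u}}$. Hence the optimum of Problem~\ref{eq:slack-constrained-objective} is at least $1/\norm{u}$, and $\bar\epsilon$-suboptimality of $\bar w$ forces $\gamma \ge 1/\norm{u} - \bar\epsilon = (1-\bar\epsilon\norm{u})/\norm{u}$.

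\emph{The two bounds.} For the norm, feasibility of $\bar w$ gives $\norm{\bar w} \le 1$, so $\norm{w} = \norm{\bar w}/\gamma \le 1/\gamma \le \norm{u}/(1-\bar\epsilon\norm{u})$. For the loss, let $\bar\xi$ be a slack vector attaining the inner maximum at $\bar w$, so that $y_i\inner{\bar w}{\Phi(x_i)} + \bar\xi_i \ge \gamma$ and $\bar\xi_i \ge 0$, i.e.\ $\bar\xi_i \ge \max(0, \gamma - y_i\inner{\bar w}{\Phi(x_i)})$. Since $\gamma > 0$ I can pull the scaling out of the hinge loss and estimate
\begin{align*}
\hat{\mathcal{L}}(w) &= \frac{1}{n}\sum_i \ell\!\left(\frac{y_i\inner{\bar w}{\Phi(x_i)}}{\gamma}\right) = \frac{1}{n\gamma}\sum_i \max\!\left(0,\, \gamma - y_i\inner{\bar w}{\Phi(x_i)}\right) \\
&\le \frac{1}{n\gamma}\sum_i \bar\xi_i \le \frac{\nu}{\gamma},
\end{align*}
and substituting $\nu = \hat{\mathcal{L}}(u)/\norm{u}$ together with $\gamma \ge (1-\bar\epsilon\norm{u})/\norm{u}$ gives $\hat{\mathcal{L}}(w) \le \hat{\mathcal{L}}(u)/(1-\bar\epsilon\norm{u})$.

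There is no real obstacle here; the only delicate points are bookkeeping ones: remembering that the objective is being \emph{maximized}, so an $\bar\epsilon$-suboptimal solution has value $\gamma$ lying below the optimum — which is the direction we want — and noting that dividing by $\gamma$ is legitimate precisely because the lower bound forces $\gamma > 0$, which is also why the statement is only meaningful when $\bar\epsilon\norm{u} < 1$. The single idea doing all the work is the identity $a + \ell(a) = \max(a,1)$, which is what converts the hinge loss of the reference predictor $u$ into a guarantee on the margin-plus-slack objective.
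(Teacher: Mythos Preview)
The paper does not actually prove this lemma; it is quoted verbatim from \citet[Lemma~2.1]{HazanKoSr11} with no argument given. Your proof is correct and is the natural one: exhibit $u/\norm{u}$ together with slacks $\xi_i = \ell(y_i\inner{u}{\Phi(x_i)})/\norm{u}$ as a feasible point certifying that the optimum is at least $1/\norm{u}$ (via $a+\ell(a)\ge 1$), deduce $\gamma \ge (1-\bar\epsilon\norm{u})/\norm{u}$, and then read off both bounds from $\norm{\bar w}\le 1$ and $\mathbf{1}^T\bar\xi \le n\nu$ after dividing by $\gamma$. There is nothing to compare against in the present paper, but this is almost certainly the argument in the cited source as well.
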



\section{The Stochastic Batch Perceptron}\label{sec:algorithm}

In this section, we will develop the Stochastic Batch Perceptron. We consider
Problem \ref{eq:slack-constrained-objective} as optimization of the variable
$w$ with a single constraint $\norm{w}\leq 1$, with the objective being to
maximize:
\begin{equation}
\label{eq:fw-definition} f\left(w\right) ~= \max_{\xi\succeq 0,
\mathbf{1}^{T}\xi\le n\nu} ~~\min_{p\in\Delta^{n}} ~~\sum_{i=1}^{n} p_{i}
\left( y_{i}\inner{w}{\Phi\left(x_{i}\right)}+\xi_{i} \right)
\end{equation}
Notice that we replaced the minimization over training indices $i$ in Problem
\ref{eq:slack-constrained-objective} with an equivalent minimization over the
probability simplex, $\Delta^n = \{p \succeq 0 : \mathbf{1}^T p = 1\}$, and
that we consider $p$ and $\xi$ to be a part of the objective, rather than
optimization variables. The objective $f(w)$ is a concave function of $w$, and
we are maximizing it over a convex constraint $\norm{w}\leq 1$, and so this is
a convex optimization problem in $w$.

Our approach will be to perform a stochastic gradient update on $w$ at each
iteration: take a step in the direction specified by an unbiased estimator of a
(super)gradient of $f(w)$, and project back to $\norm{w}\leq 1$. To this end,
we will need to identify the (super)gradients of $f(w)$ and understand how to
efficiently calculate unbiased estimates of them.

\subsection{Warmup: The Separable Case}\label{subsec:warmup}

As a warmup, we first consider the separable case, where $\nu=0$ and no errors
are allowed. The objective is then:
\begin{equation}
\label{eq:sep-fw} f(w) = \min_i y_{i}\inner{w}{\Phi\left(x_{i}\right)},
\end{equation}
This is simply the ``margin'' by which all points are correctly classified,
i.e.~$\gamma$ s.t.~$\forall_i~ y_i \inner{w}{\Phi(x_i)}\geq\gamma$. We seek a
linear predictor $w$ with the largest possible margin. It is easy to see that
(super)gradients with respect to $w$ are given by $y_i \Phi(x_i)$ for any index
$i$ attaining the minimum in Equation \ref{eq:sep-fw}, i.e.~by the ``most poorly
classified'' point(s). A gradient ascent approach would then be to iteratively
find such a point, update $w \leftarrow w+\eta y_i \Phi(x_i)$, and project back
to $\norm{w}\leq 1$. This is akin to a ``batch Perceptron'' update, which at
each iteration searches for a violating point and adds it to the predictor.

In the separable case, we could actually use \emph{exact} supergradients of the
objective. As we shall see, it is computationally beneficial in the
non-separable case to base our steps on unbiased gradient estimates. We
therefore refer to our method as the ``Stochastic Batch Perceptron'' (SBP), and
view it as a generalization of the batch Perceptron which uses stochasticity
and is applicable in the non-separable setting. In the same way that the
``batch Perceptron'' can be used to maximize the margin in the separable case,
the SBP can be used to obtain any SVM solution along the Pareto front of the
bi-criterion Problem \ref{eq:bi-criterion-objective}.

\subsection{Supergradients of $f(w)$}\label{subsec:minimax-optimality}

\begin{figure}[t]

\centering
\includegraphics[width=0.6\columnwidth]{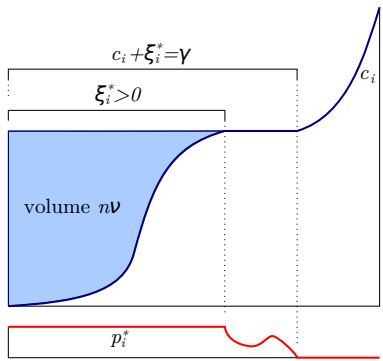}

\caption{
\small
Illustration of how one finds $\xi^{*}$ and $p^{*}$. The upper curve
represents the values of the responses $c_{i}$, listed in order of increasing
magnitude.
The lower curve illustrates a minimax optimal probability distribution $p^{*}$.
%
}

\label{fig:water}

\end{figure}

For a fixed $w$, we define $c \in \mathbb{R}^n$ be the vector of ``responses'':
\begin{equation}
\label{eq:responses-definition} c_{i} = y_{i}\inner{w}{\Phi\left(x_{i}\right)}
\end{equation}

Supergradients of $f(w)$ at $w$ can be characterized explicitly in terms of
minimax-optimal pairs $p^*$ and $\xi^*$ such that $p^* =
\arg\min_{p\in\Delta^n} p^t(c+\xi^*)$ and $\xi^* =
\arg\max_{\xi\succeq 0, \mathbf{1}^{T}\xi\le n\nu} (p^*)^T(c+\xi)$.

\medskip
\ifinappendix
	\begin{replem}{lem:slack-constrained-supergradient}
\else
	\begin{lem}[Proof in Appendix \ref{app:proofs}]\label{lem:slack-constrained-supergradient}
\fi

For any $w$, let $p^*,\xi^*$ be minimax optimal for Equation
\ref{eq:fw-definition}. Then $\sum_{i=1}^{n}
p^{*}_{i}y_{i}\Phi\left(x_{i}\right)$ is a supergradient of $f(w)$ at $w$.

\ifinappendix
	\end{replem}
\else
	\end{lem}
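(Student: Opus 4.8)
The plan is to verify the supergradient inequality directly. Writing $g = \sum_{i=1}^{n} p^{*}_{i} y_{i}\Phi(x_{i})$, I must show that $f(w') \le f(w) + \inner{g}{w'-w}$ for every $w'$. It is convenient to introduce the bilinear payoff $\Psi_{w}(p,\xi) = \sum_{i=1}^{n} p_{i}\left(y_{i}\inner{w}{\Phi(x_{i})} + \xi_{i}\right)$ on $\Delta^{n}\times\Xi$, where $\Xi = \{\xi : \xi\succeq 0,\ \mathbf{1}^{T}\xi\le n\nu\}$, so that $f(w) = \max_{\xi\in\Xi}\min_{p\in\Delta^{n}}\Psi_{w}(p,\xi)$. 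The hypothesis that $(p^{*},\xi^{*})$ is minimax optimal for Equation \ref{eq:fw-definition} says exactly that $(p^{*},\xi^{*})$ is a saddle point of $\Psi_{w}$: namely $\Psi_{w}(p^{*},\xi^{*}) \le \Psi_{w}(p,\xi^{*})$ for all $p\in\Delta^{n}$, and $\Psi_{w}(p^{*},\xi) \le \Psi_{w}(p^{*},\xi^{*})$ for all $\xi\in\Xi$.

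The first step is to evaluate $f(w)$ at the saddle point. For any $\xi\in\Xi$ we have $\min_{p}\Psi_{w}(p,\xi) \le \Psi_{w}(p^{*},\xi) \le \Psi_{w}(p^{*},\xi^{*})$, so $f(w)\le\Psi_{w}(p^{*},\xi^{*})$; conversely $f(w)\ge\min_{p}\Psi_{w}(p,\xi^{*}) = \Psi_{w}(p^{*},\xi^{*})$ by the first saddle inequality. Hence $f(w)=\Psi_{w}(p^{*},\xi^{*})$. Next, since every $p^{*}_{i}\ge 0$, one checks that $\max_{\xi\in\Xi}(p^{*})^{T}\xi = n\nu\max_{i}p^{*}_{i}$ (put the whole budget on a largest coordinate), and the second saddle inequality gives $\Psi_{w}(p^{*},\xi^{*}) = \max_{\xi\in\Xi}\Psi_{w}(p^{*},\xi) = \sum_{i}p^{*}_{i}y_{i}\inner{w}{\Phi(x_{i})} + n\nu\max_{i}p^{*}_{i} = \inner{g}{w} + n\nu\max_{i}p^{*}_{i}$. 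Combining these two facts yields the key identity $n\nu\max_{i}p^{*}_{i} = f(w) - \inner{g}{w}$.

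The final step is to reuse $p^{*}$ as a (generally suboptimal) choice in the inner minimum at an arbitrary $w'$. For every $\xi\in\Xi$, $\min_{p}\Psi_{w'}(p,\xi)\le\Psi_{w'}(p^{*},\xi)$, and therefore $f(w') = \max_{\xi\in\Xi}\min_{p}\Psi_{w'}(p,\xi) \le \max_{\xi\in\Xi}\Psi_{w'}(p^{*},\xi) = \sum_{i}p^{*}_{i}y_{i}\inner{w'}{\Phi(x_{i})} + n\nu\max_{i}p^{*}_{i} = \inner{g}{w'} + n\nu\max_{i}p^{*}_{i}$. Substituting the identity from the previous paragraph gives $f(w')\le\inner{g}{w'} + f(w) - \inner{g}{w} = f(w) + \inner{g}{w'-w}$, which is precisely the statement that $g$ is a supergradient of $f$ at $w$.

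Notably this needs no minimax theorem: only the trivial bound $\min_{p}(\cdot)\le(\cdot)\big|_{p=p^{*}}$ and the explicit maximization over the slack polytope. The only points that deserve care are (i) unpacking ``minimax optimal'' into genuine saddle-point inequalities and deducing $f(w)=\Psi_{w}(p^{*},\xi^{*})$ — the argmin/argmax exist since $\Delta^{n}$ and $\Xi$ are compact and $\Psi_{w}$ is continuous — and (ii) observing that the argument is unchanged when $\Phi$ maps into a (possibly infinite-dimensional) Hilbert space, since $g$ is a finite combination of the $\Phi(x_{i})$ and only inner products against $g$ ever appear. I expect step (i) to be the only part that is more than routine bookkeeping.
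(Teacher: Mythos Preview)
Your proof is correct and follows essentially the same approach as the paper: upper-bound $f$ at the new point by fixing $p=p^{*}$ in the inner minimum, then use the saddle-point property of $(p^{*},\xi^{*})$ to identify the residual term with $f(w)$. The paper's version is marginally shorter because it never evaluates $\max_{\xi\in\Xi}(p^{*})^{T}\xi$ explicitly---after splitting off the $\inner{g}{w'-w}$ term it simply observes that $\max_{\xi}\Psi_{w}(p^{*},\xi)=f(w)$ by minimax optimality---so your identity $n\nu\max_{i}p^{*}_{i}=f(w)-\inner{g}{w}$, while correct, is an avoidable detour.
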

\fi

This suggests a simple method for obtaining unbiased estimates of
supergradients of $f(w)$: sample a training index $i$ with probability
$p^{*}_{i}$, and take the stochastic supergradient to be
$y_{i}\Phi\left(x_{i}\right)$. The only remaining question is how one
finds a minimax optimal $p^{*}$.

It is possible to find a minimax optimal $p^*$ in $O(n)$ time. For any $\xi$, a
solution of $\min_{p \in \Delta^n} p^T(x+\xi)$ must put all of the probability
mass on those indices $i$ for which $c_{i}+\xi_{i}$ is minimized. Hence, an
optimal $\xi^{*}$ will maximize the minimal value of $c_{i}+\xi^{*}_{i}$. This
is illustrated in Figure \ref{fig:water}. The intuition is that the total mass
$n\nu$ available to $\xi$ is distributed among the indices as if this volume of
water were poured into a basin with height $c_{i}$. The result is that the
indices $i$ with the lowest responses have columns of water above them such
that the common surface level of the water is $\gamma$.

Once the ``water level'' $\gamma$ has been determined, the optimal $p^*$ must
be uniform on those indices $i$ for which $\xi^*_i>0$, i.e.~for which
$c_i<\gamma$, must be zero on all $i$ s.t.~$c_i>\gamma$, and could take any
intermediate value when $c_i=\gamma$ (that is, for some $q>0$, we must have
$c_i<\gamma \rightarrow p^*_i=q$, $c_i=\gamma \rightarrow 0\leq p^*_i \leq q$,
and $c_i>\gamma \rightarrow p^*_i=0$---see Figure \ref{fig:water}). In
particular, the uniform distribution over all indices such that $c_{i}\leq
\gamma$ is minimax optimal. Notice that in the separable case, where no slack
is allowed, $\gamma=\min_i c_i$ and any distribution supported on the
minimizing point(s) is minimax optimal, and $y_i \Phi(x_i)$ is an {\em exact}
supergradient for such an $i$, as discussed in Section \ref{subsec:warmup}.

It is straightforward to find the water level $\gamma$ in linear time once the
responses $c_i$ are sorted (as in Figure \ref{fig:water}), i.e.~with a total
runtime of $O(n \log n)$ due to sorting. It is also possible to find the water
level $\gamma$ in linear time, without sorting the responses, using a
divide-and-conquer algorithm, further of which may be found in Appendix
\ref{app:algorithm}.
%
%

\subsection{Kernelized Implementation}\label{subsec:slack-constrained-kernel}

In a kernelized SVM, $w$ is an element of an implicit space, and cannot be
represented explicitly. We therefore represent $w$ as $w = \sum_{i=1}^{n}
\alpha_{i}y_{i}\Phi\left(x_{i}\right)$, and maintain not $w$ itself, but
instead the coefficients $\alpha_i$. Our stochastic gradient estimates are
always of the form $y_i \Phi(x_i)$ for an index $i$. Taking a step in
this direction amounts to simply increasing the corresponding $\alpha_i$.

We could calculate all the responses $c_i$ at each iteration as $c_i =
\sum_{j=1}^{n} \alpha_j y_i y_j K(x_i,x_j)$. However, this would require a
quadratic number of kernel evaluations {\em per iteration}. Instead, as is
typically done in kernelized SVM implementations, we keep the responses $c_i$
on hand, and after each stochastic gradient step of the form $w \leftarrow
w+\eta y_j \Phi\left( x_j \right)$,
we update the responses as:
\begin{equation}
\label{eq:ciupdate} c_i \leftarrow c_i + \eta y_i y_j K(x_i,x_j)
\end{equation}
This involves only $n$ kernel evaluations per iteration.

In order to project $w$ onto the unit ball, we must either track $\norm{w}$ or
calculate it from the responses as $\norm{w}=\sum_{i=1}^n \alpha_i c_i$.
Rescaling $w$ so as to project it back into $\norm{w}\leq 1$ is performed by
rescaling all coefficients $\alpha_i$ and responses $c_i$, again taking time
$O(n)$ and no additional kernel evaluations.

\subsection{Putting it Together}\label{subsec:slack-constrained-sgd}

We are now ready to summarize the SBP algorithm. Starting from $w^{(0)}=0$ (so
both $\alpha^{(0)}$ and all responses are zero), each iteration proceeds as
follows:
\vspace{-0.5em}
\begin{enumerate*}
\item Find $p^*$ by finding the ``water level'' $\gamma$ from the responses
(Section \ref{subsec:minimax-optimality}), and taking $p^*$ to be uniform on
those indices for which $c_i\leq \gamma$.
\item Sample $j \sim p^*$.
\item Update $w^{(t+1)} \leftarrow
\mathcal{P}\left(w^{(t)}+\eta_{t}y_{j}\Phi\left(x_{j}\right)\right)$, where
$\mathcal{P}$ projects onto the unit ball and $\eta_{t} = \frac{1}{\sqrt{t}}$.
This is done by first increasing $\alpha \leftarrow\alpha + \eta_{t}$ and
updating the responses as in Equation \ref{eq:ciupdate}, then calculating
$\norm{w}$ (Section \ref{subsec:slack-constrained-kernel}) and scaling $\alpha$
and $c$ by $\min(1,1/\norm{w})$.
\end{enumerate*}
\vspace{-0.5em}
Updating the responses as in Equation \ref{eq:ciupdate} requires $O(n)$ kernel
evaluations (the most computationally expensive part) and all other operations
require $O(n)$ scalar arithmetic operations.


Since at each iteration we are just updating using an unbiased
estimator of a supergradient, we can rely on the standard analysis of
stochastic gradient descent to bound the suboptimality after $T$
iterations: 

\medskip 
\ifinappendix
	\begin{replem}{lem:zinkevich-batch}
\else
	\begin{lem}[Proof in Appendix \ref{app:proofs}]\label{lem:zinkevich-batch}
\fi

For any $T,\delta>0$, after $T$ iterations of the Stochastic Batch
Perceptron, with probability at least $1-\delta$, the average iterate
$\bar{w} = \frac{1}{T}\sum_{t=1}^{T} w^{(t)}$ (corresponding to
$\bar{\alpha} = \frac{1}{T} \sum_{t=1}^{T} \alpha^{(t)}$), satisfies:
%
%
$
f\left(\bar{w}\right) \geq 
\sup_{\norm{w}\leq 1} f\left(w \right) -
O\left(\sqrt{\frac{1}{T}\log\frac{1}{\delta}}\right).
$
%

\ifinappendix
	\end{replem}
\else
	\end{lem}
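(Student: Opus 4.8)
The plan is to observe that the SBP is exactly projected stochastic supergradient ascent on the concave objective $f$ over the unit ball, and then to combine the standard online gradient descent regret bound with a martingale concentration argument. Two facts drive the analysis, and I would record them first: (i) the feasible set $\{\norm{w}\le 1\}$ has diameter $2$, and the rescaling by $\min(1,1/\norm{w})$ in the update step is precisely Euclidean projection onto it; (ii) each stochastic step direction $g^{(t)} = y_j\Phi(x_j)$ with $j\sim p^{*(t)}$ satisfies $\norm{g^{(t)}} = \sqrt{K(x_j,x_j)}\le 1$, and its conditional expectation given $w^{(1)},\dots,w^{(t)}$ is $h^{(t)} := \sum_i p^{*(t)}_i y_i\Phi(x_i)$, which by Lemma \ref{lem:slack-constrained-supergradient} is a supergradient of $f$ at $w^{(t)}$.

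Next I would fix a maximizer $w^*$ of $f$ over the unit ball (which exists since $f$ is concave and continuous and the ball is compact) and run the textbook analysis of projected (sub)gradient descent on the iterates $w^{(t+1)} = \mathcal{P}(w^{(t)} + \eta_t g^{(t)})$: expanding $\norm{w^{(t+1)} - w^*}^2$, using non-expansiveness of $\mathcal{P}$, and telescoping with $\eta_t = 1/\sqrt{t}$, together with the diameter bound and $\norm{g^{(t)}}\le 1$, yields $\sum_{t=1}^T \inner{g^{(t)}}{w^* - w^{(t)}} = O(\sqrt{T})$. Since $h^{(t)}$ is a supergradient of the concave $f$, concavity gives $f(w^*) - f(w^{(t)}) \le \inner{h^{(t)}}{w^* - w^{(t)}}$; taking conditional expectations and using $h^{(t)} = \mathbb{E}[g^{(t)}\mid w^{(1)},\dots,w^{(t)}]$ gives $\mathbb{E}[\sum_t (f(w^*) - f(w^{(t)}))] = O(\sqrt{T})$, and Jensen's inequality $f(\bar w)\ge \tfrac1T\sum_t f(w^{(t)})$ then already yields the statement in expectation.

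Finally, to upgrade to the high-probability bound I would introduce the martingale difference sequence $Z_t := \inner{g^{(t)} - h^{(t)}}{w^* - w^{(t)}}$, which is adapted to $w^{(1)},\dots,w^{(t)}$, has zero conditional mean, and is bounded by $\abs{Z_t}\le \norm{g^{(t)} - h^{(t)}}\,\norm{w^* - w^{(t)}}\le 2\cdot 2 = 4$. Azuma--Hoeffding gives $-\sum_{t=1}^T Z_t = O(\sqrt{T\log(1/\delta)})$ with probability at least $1-\delta$, so $\sum_{t=1}^T (f(w^*) - f(w^{(t)}))\le \sum_t \inner{h^{(t)}}{w^* - w^{(t)}} = \sum_t \inner{g^{(t)}}{w^* - w^{(t)}} - \sum_t Z_t = O(\sqrt{T}) + O(\sqrt{T\log(1/\delta)})$; dividing by $T$ and invoking Jensen once more gives $f(\bar w)\ge f(w^*) - O\!\left(\sqrt{\tfrac1T\log\tfrac1\delta}\right)$. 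I expect the main (only mildly non-routine) obstacle to be exactly this martingale concentration step needed to pass from the in-expectation guarantee to the high-probability one; the remainder is the standard projected-SGD argument, and the normalization $K(x,x')\le 1$ enters only to bound the step-direction norms.
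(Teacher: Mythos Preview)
Your proposal is correct and takes essentially the same approach as the paper: the paper's proof is a two-line citation to Zinkevich's online gradient descent regret bound combined with the online-to-batch high-probability conversion of Cesa-Bianchi, Conconi and Gentile, and you have correctly unpacked precisely these two ingredients (projected-SGD regret plus Azuma--Hoeffding on the martingale of stochastic-versus-true supergradients).
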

\fi

Since each iteration is dominated by $n$ kernel evaluations, and thus takes
linear time (we take a kernel evaluation to require $O(1)$ time), the overall
runtime to achieve $\epsilon$ suboptimality for Problem
\ref{eq:slack-constrained-objective} is $O(n/\epsilon^2)$.

\subsection{Learning Runtime}\label{subsec:slack-constrained-runtime}

The previous section has given us the runtime for obtaining a certain
suboptimality of Problem \ref{eq:slack-constrained-objective}. However, since
the suboptimality in this objective is not directly comparable to the
suboptimality of other scalarizations, e.g. Problem
\ref{eq:regularized-objective}, we follow \citet{BottouBo08,ShalevSr08}, and
analyze the runtime required to achieve a desired generalization performance,
instead of that to achieve a certain optimization accuracy on the empirical
optimization problem.

\newcommand{\genR}{\mathcal{L}_{0/1}}
\newcommand{\genRh}{\mathcal{L}}
\newcommand{\gendelta}{\epsilon}

Recall that our true learning objective is to find a predictor with low
generalization error $\genR(w) = \probability[(x,y)]{ y\inner{w}{\Phi(x)} \leq
0 }$ with respect to some unknown distribution over $x,y$ based on a training
set drawn i.i.d.~from this distribution. We assume that there exists some
(unknown) predictor $u$ that has norm $\norm{u}$ and low expected hinge loss
$\mathcal{L}^* = \genRh(u) = \expectation{\ell(y\inner{u}{\Phi(x)})}$
(otherwise, there is no point in training a SVM), and analyze the runtime to
find a predictor $w$ with generalization error $\genR(w) \leq
\mathcal{L}^*+\gendelta$.

In order to understand the SBP runtime, we must determine both the required
sample size and optimization accuracy. Following \citet{HazanKoSr11}, and based
on the generalization guarantees of \citet{SrebroSrTe10}, using a sample of
size:
\begin{equation}
\label{eq:samplesize} n = \tilde{O}\left(
\left(\frac{\mathcal{L}^*+\gendelta}{\gendelta}\right)
\frac{\norm{u}^2}{\gendelta} \right)
\end{equation}
and optimizing the empirical SVM bi-criterion Problem
\ref{eq:bi-criterion-objective} such that:
\begin{align}
\label{eq:slack-constrained-target} \norm{w} & \le 2\norm{u} ~~;~~ 
 \hat{\mathcal{L}}\left(w\right) - \hat{\mathcal{L}}\left(u\right) \le
\gendelta/2
\end{align}
suffices to ensure $\genR(w) \leq \mathcal{L}^*+\gendelta$ with high
probability.
Referring to Lemma \ref{lem:slack-constrained-suboptimality}, Equation
\ref{eq:slack-constrained-target} will be satisfied for $\bar{w}/\gamma$ as
long as $\bar{w}$ optimizes the objective of Problem
\ref{eq:slack-constrained-objective} to within:
\begin{equation}
\label{eq:reqbarepsilon} \bar{\epsilon} = \frac{\gendelta/2}{\norm{u}
(\hat{\mathcal{L}}(u)+\gendelta/2)} \geq \Omega\left(\frac{\gendelta}{\norm{u}
(\hat{\mathcal{L}}(u)+\gendelta)}\right)
\end{equation}
where the inequality holds with high probability for the sample size of
Equation \ref{eq:samplesize}. Plugging this sample size and the optimization
accuracy of Equation \ref{eq:reqbarepsilon} into the SBP runtime of
$O(n/\bar{\epsilon}^2)$ yields the overall runtime:
\begin{equation}
\label{eq:SBP-learning-runtime} \tilde{O}\left(
\left(\frac{\mathcal{L}^*+\epsilon}{\epsilon}\right)^3
\frac{\norm{u}^4}{\epsilon} \right)
\end{equation}
for the SBP to find $\bar{w}$ such that its rescaling satisfies $\genR(w) \leq
\genRh(u) + \gendelta$ with high probability.

In the realizable case, where $\mathcal{L}^*=0$, or more generally when we would
like to reach $\mathcal{L}^*$ to within a small constant multiplicative factor,
we have $\epsilon=\Omega(\mathcal{L}^*)$, the first factor in Equation
\ref{eq:SBP-learning-runtime} is a constant, and the runtime
simplifies to $\tilde{O}(\norm{u}^4/\gendelta)$. As we will see in
Section \ref{sec:comparison}, this is a better guarantee than that
enjoyed by any other SVM optimization approach.

\subsection{Including an Unregularized Bias}\label{subsec:unregularized-bias}

It is possible to use the SBP to train SVMs with a bias term, i.e.~where
one seeks a predictor of the form $x \mapsto (\inner{w}{\Phi(x)}+b)$.
We then take stochastic gradient steps on:
\begin{align}
\MoveEqLeft \label{eq:fw-definition-bias} f(w) = \\
\notag & \max_{\begin{array}{c} \scriptstyle b\in\mathbb{R}, \xi\succeq 0 \\
\scriptstyle \mathbf{1}^{T}\xi\le n\nu \end{array}}\min_{p\in\Delta^{n}}
\sum_{i=1}^{n} p_{i} \left( y_{i}\inner{w}{\Phi(x_{i})} + y_{i} b + \xi_{i}
\right)
\end{align}
Lemma \ref{lem:slack-constrained-supergradient} still holds, but we must now
find minimax optimal $p^*$,$\xi^*$ and $b^*$. This can be accomplished using a
modified ``water filling'' involving two basins, one containing the
positively-classified examples, and the other the negatively-classified ones.
As in the case without an unregularized bias, this can be accomplished in
$O(n)$ time---see Appendix \ref{app:algorithm} for details.


\section{Relationship to Other Methods}\label{sec:comparison}

\begin{table}[*t]

\caption{
\small
Upper bounds, up to log factors, on the runtime (number of kernel evaluations)
required to achieve $\genR(w)\leq \genRh(u)+\epsilon$.
}

\begin{center}
\begin{tabular}{l|cc}
\hline
\abovespace\belowspace
 & Overall & $\epsilon=\Omega\left(\mathcal{L}\left(u\right)\right)$ \\
\hline
\abovespace
SBP & $\left(\frac{\mathcal{L}\left(u\right)+\epsilon}{\epsilon}\right)^{3}\frac{\norm{u}^{4}}{\epsilon}$ & $\frac{\norm{u}^{4}}{\epsilon}$ \\
Dual Decomp. & $\left(\frac{\mathcal{L}\left(u\right)+\epsilon}{\epsilon}\right)^{2}\frac{\norm{u}^{4}}{\epsilon^{2}}$ & $\frac{\norm{u}^{4}}{\epsilon^{2}}$ \\
\belowspace
SGD on $\hat{\mathcal{L}}$ & $\left(\frac{\mathcal{L}\left(u\right)+\epsilon}{\epsilon}\right)\frac{\norm{u}^{4}}{\epsilon^{3}}$ & $\frac{\norm{u}^{4}}{\epsilon^{3}}$ \\
\hline
\end{tabular}
\end{center}

\label{tab:bounds}

\end{table}

We discuss the relationship between the SBP and several other SVM
optimization approaches, highlighting similarities and key
differences, and comparing their performance guarantees.

\subsection{SIMBA}

Recently, \citet{HazanKoSr11} presented SIMBA, a method for training {\em
linear} SVMs based on the same ``slack constrained'' scalarization (Problem
\ref{eq:slack-constrained-objective}) we use here. SIMBA also fully optimizes
over the slack variables $\xi$ at each iteration, but differs in that, instead
of fully optimizing over the distribution $p$ (as the SBP does), SIMBA updates
$p$ using a stochastic mirror descent step. The predictor $w$ is then updated,
as in the SBP, using a random example drawn according to $p$. A SBP iteration
is thus in a sense more ``thorough'' then a SIMBA iteration. The SBP
theoretical guarantee
(Lemma \ref{lem:zinkevich-batch}) is correspondingly better by a logarithmic
factor (compare to \citet[Theorem 4.3]{HazanKoSr11}). All else being equal, we
would prefer performing a SBP iteration over a SIMBA iteration.

For linear SVMs,
a SIMBA iteration can be performed in time $O(n+d)$. However, fully optimizing
$p$ as described in Section \ref{subsec:minimax-optimality} requires the
responses $c_i$, and calculating or updating all $n$ responses would
require time $O(nd)$. In this setting, therefore, a SIMBA iteration is
much more efficient than a SBP iteration.

In the kernel setting, calculating even a single response requires $O(n)$
kernel evaluation, which is the same cost as updating \emph{all} responses
after a change to a single coordinate $\alpha_{i}$ (Section
\ref{subsec:slack-constrained-kernel}). This makes the responses essentially
``free'', and gives an advantage to methods such as the SBP (and the dual
decomposition methods discussed below) which make use of the responses.

Although SIMBA is preferable for linear SVMs,
the SBP is preferable for kernelized SVMs.
It should also be noted that SIMBA relies heavily on having direct access to
features, and that it is therefore not obvious how to apply it directly in the
kernel setting.

\subsection{Pegasos and SGD on $\hat{\mathcal{L}}(w)$}

Pegasos \cite{ShalevSiSrCo10} is a SGD method optimizing the regularized
scalarization of Problem \ref{eq:regularized-objective}. Alternatively, one can
perform SGD on $\hat{\mathcal{L}}(w)$ subject to the constraint that
$\norm{w}\le B$, yielding similar learning guarantees (e.g. \cite{Zhang04}). At
each iteration, these algorithms pick an example uniformly at random from the
training set. If the margin constraint is violated on the example, $w$ is
updated by adding to it a scaled version of $y_i \Phi(x_i)$.
Then, $w$ is scaled and possibly projected back to $\norm{w}\leq B$. The actual
update performed at each iteration is thus very similar to that of the SBP.
The main difference is that in Pegasos and related SGD approaches, examples are
picked uniformly at random, unlike the SBP which samples from the set of
violating examples.

In a linear SVM, where $\Phi(x_i)\in\mathbb{R}^d$ are given explicitly, each
Pegasos (or SGD on $\hat{\mathcal{L}}(w)$) iteration is extremely simple and
requires runtime which is linear in the dimensionality of $\Phi(x_i)$. A SBP
update would require calculating and referring to all $O(n)$ responses.
However, with access only to kernel evaluations, even a Pegasos-type update
requires either considering all support vectors, or alternatively updating all
responses, and might also take $O(n)$ time, just like the much ``smarter'' SBP
step.

To understand the learning runtime of such methods in the kernel setting,
recall that SGD converges to an $\epsilon$-accurate solution of the
optimization problem after at most $\norm{u}^2/\epsilon^2$ iterations.
Therefore, the overall runtime is $n\norm{u}^2/\epsilon^2$. Combining this with
Equation \ref{eq:samplesize} yields that the runtime requires by SGD to achieve
a learning accuracy of $\epsilon$ is $ \tilde{O}\left(
((\mathcal{L}^*+\gendelta)/\gendelta) \norm{u}^4/\gendelta^3 \right) $.  When
$\epsilon = \Omega(\mathcal{L}^*)$, this scales as $1/\epsilon^3$ compared with
the $1/\epsilon$ scaling for the SBP (see also Table \ref{tab:bounds}).

\subsection{Dual Decomposition Methods}

Many of the most popular packages for optimizing kernel SVMs, including LIBSVM
\citep{ChangLi01} and SVM-Light \citep{Joachims98}, use dual-decomposition
approaches. This family of algorithms works on the dual of the scalarization
\ref{eq:regularized-objective}, given by:
\begin{equation}
\label{eq:dual}
\max_{\alpha \in \left[0,\frac{1}{\lambda\,n}\right]^n} ~~ \sum_{i=1}^n
\alpha_i - \frac{1}{2} \sum_{i,j=1}^n \alpha_i \alpha_j y_i y_j K(x_i,x_j)
\end{equation}
and proceed by iteratively choosing a small working set of dual variables
$\alpha_{i}$, and then optimizing over these variables while holding all other
dual variables fixed. At an extreme, SMO \citep{Platt98} uses a working set of
the smallest possible size (two in problems with an unregularized bias, one in
problems without).
Most dual decomposition approaches rely on having access to all the responses
$c_i$ (as in the SBP), and employ some heuristic to select variables
$\alpha_{i}$ that are likely to enable a significant increase in the dual
objective.

On an objective without an unregularized bias the structure of SMO is similar
to the SBP: the responses $c_i$ are used to choose a single point $j$ in the
training set, then $\alpha_j$ is updated,
and finally the responses are updated accordingly. There are two important
differences, though: how the training example to update is chosen, and how the
change in $\alpha_j$ is performed.

SMO updates $\alpha_j$ so as to exactly optimize the \emph{dual} Problem
\ref{eq:dual}, while the SBP takes a step along $\alpha_j$ so as to improve the
{\em primal} Problem \ref{eq:slack-constrained-objective}. Dual feasibility is
{\em not} maintained, so the SBP has more freedom to use large coefficients on
a few support vectors, potentially resulting in sparser solutions.

The use of heuristics to choose the training example to update makes SMO very
difficult to analyze.  Although it is known to converge linearly after some
number of iterations \cite{chen2006study}, the number of iterations required to
reach this phase can be very large (see a detailed discussion in Appendix
\ref{app:dual-decomposition}). To the best of our knowledge,
the most satisfying analysis for a dual decomposition method is the one given
in \citet{hush2006qp}.
In terms of learning runtime, this analysis yields a runtime of
$\tilde{O}\left(\left(\left(\genRh(u)+\epsilon\right)/\epsilon\right)^2\norm{u}^4/\epsilon^2\right)$
to guarantee $\genR(w) \leq \genRh(u)+\epsilon$. When $\epsilon=\Omega(L^*)$,
this runtime scales as $1/\epsilon^2$, compared with the $1/\epsilon$ guarantee
for the SBP.

\subsection{Stochastic Dual Coordinate Ascent}

Another variant of the dual decomposition approach is to choose a single
$\alpha_i$ randomly at each iteration and update it so as to optimize Equation
\ref{eq:dual} \cite{HsiehChLiKeSu08}. The advantage here is that we do not
need to use all of the responses at each iteration, so that if it is easy to
calculate responses on-demand, as in the case of linear SVMs,
each SDCA iteration can be calculated in time $O(d)$ \cite{HsiehChLiKeSu08}.
In a sense, SDCA relates to SMO in a similar fashion that Pegasos relates to
the SBP: SDCA and Pegasos are preferable on linear SVMs since they choose
working points at random; SMO and the SBP choose working points based on more
information (namely, the responses), which are unnecessarily expensive to
compute in the linear case, but, as discussed earlier, are essentially ``free''
in kernelized implementations. Pegasos and the SBP both work on the primal
(though on different scalarizations), while SMO and SDCA work on the dual and
maintain dual feasibility.

The current best analysis of the runtime of SDCA is not satisfying, and yields
the bound $n/\lambda \epsilon$ on the number of iterations, which is a factor
of $n$ larger than the bound for Pegasos. Since the cost of each iteration is
the same, this yields a significantly worse guarantee. We do not know if a
better guarantee can be derived for SDCA. See a detailed discussion in Appendix
\ref{app:dual-decomposition}.

\subsection{The Online Perceptron}\label{subsec:comparison-perceptron}

%
We have so far considered only the problem of optimizing the bi-criterion SVM
objective of Problem \ref{eq:bi-criterion-objective}. However, because the
online Perceptron achieves the same form of learning guarantee (despite not
optimizing the bi-criterion objective), it is reasonable to consider it, as well.

The online Perceptron makes a \emph{single} pass over the training set. At each
iteration, if $w$ errs on the point under consideration (i.e.~$y_i
\inner{w}{\Phi(x_i)}\leq 0$), then $y_i \Phi(x_i)$ is added into $w$.
Let $M$ be the number of mistakes made by the Perceptron on the sequence of
examples. Support vectors are added only when a mistake is made, and so each
iteration of the Perceptron involves at most $M$ kernel evaluations. The total
runtime is therefore $Mn$.

While the Perceptron is an online learning algorithm, it can also be
used for obtaining guarantees on the generalization error using an
online-to-batch conversion (e.g. \cite{CesaCoGe01}). 

From a bound on the number of mistakes $M$ (e.g. \citet[Corollary
5]{Shalev07}), it is possible to show that the expected number of mistakes the
Perceptron makes is upper bounded by $n\mathcal{L}(u) + \norm{u} \sqrt{n
\mathcal{L}(u)} + \norm{u}^2$. This implies that the total runtime required by
the Perceptron to achieve $\mathcal{L}_{0/1}(w) \le \mathcal{L}(u) + \epsilon$
is
$O\left( \left(\left(\mathcal{L}(u)+\epsilon\right)/\epsilon\right)^3
\norm{u}^4/\epsilon \right)$.
This is of the same order as the bound we have derived for SBP. However, the
Perceptron does {\em not} converge to a Pareto optimal solution to the
bi-criterion Problem \ref{eq:bi-criterion-objective}, and therefore cannot be
considered a SVM optimization procedure.
Furthermore, the online Perceptron generalization analysis relies on an
``online-to-batch'' conversion technique (e.g. \cite{CesaCoGe01}), and is
therefore valid only for a \emph{single} pass over the data. If we attempt to
run the Perceptron for multiple passes, then it might begin to overfit
uncontrollably. Although the worst-case theoretical guarantee obtained after a
single pass is indeed similar to that for an optimum of the SVM objective, in
practice an optimum of the empirical SVM optimization problem does seem to have
significantly better generalization performance.

\section{Experiments}\label{sec:experiments}

\begin{table*}[*t]

\caption{
\small
Datasets, downloaded from \url{http://leon.bottou.org/projects/lasvm}, and
parameters used in the experiments. In our experiments, we used the Gaussian
kernel with bandwidth $\sigma$.
}
\vspace{-1em}

\begin{center}
\begin{tabular}{lcc|ccc|ccc}
\hline
\abovespace\belowspace
& & & \multicolumn{3}{c|}{Without unreg. bias} & \multicolumn{3}{c}{With unreg. bias} \\
Data set & Training size $n$ & Testing size & $\sigma^2$ & $\lambda$ & $\nu$ & $\sigma^2$ & $\lambda$ & $\nu$ \\
\hline
\abovespace
Reuters money\_fx    & $7770$   & $3229$  & $0.5$ & $\nicefrac{1}{n}$  & $6.34\times{10}^{-4}$ &        &                        &                        \\
Adult                & $31562$  & $16282$ & $10$  & $\nicefrac{1}{n}$  & $1.10\times{10}^{-2}$ & $100$  & $\nicefrac{1}{100n}$   & $5.79\times{10}^{-4}$  \\
MNIST ``8'' vs. rest & $60000$  & $10000$ & $25$  & $\nicefrac{1}{n}$  & $2.21\times{10}^{-4}$ & $25$   & $\nicefrac{1}{1000n}$  & $6.42\times{10}^{-11}$ \\
\belowspace
Forest               & $522910$ & $58102$ &       &                    &                       & $5000$ & $\nicefrac{1}{10000n}$ & $7.62\times{10}^{-10}$ \\
\hline
\end{tabular}
\end{center}

\label{tab:datasets}

\end{table*}

We compared the SBP to other SVM optimization approaches on the datasets in
Table \ref{tab:datasets}. We compared to Pegasos \citep{ShalevSiSrCo10}, SDCA
\citep{HsiehChLiKeSu08}, and SMO \citep{Platt98} with a second order heuristic
for working point selection \cite{FanChLi05}. These approaches work on the
regularized formulation of Problem \ref{eq:regularized-objective} or its dual
(Problem \ref{eq:dual}). To enable comparison, the parameter $\nu$ for the SBP
was derived from $\lambda$ as
$\norm{\hat{w}^{*}}\nu=\frac{1}{n}\sum_{i=1}^{n}\ell\left(y_{i}\inner{w^{*}}{\Phi\left(x_{i}\right)}\right)$,
where $\hat{w}^*$ is the known (to us) optimum. 

\begin{figure*}[*t]

\centering
\begin{tabular}{ @{} L @{} S @{} S @{} S @{} }
& \large{Reuters} & \large{Adult} & \large{MNIST} \\
\rotatebox{90}{\scriptsize{Test error}} &
\includegraphics[width=0.30\textwidth]{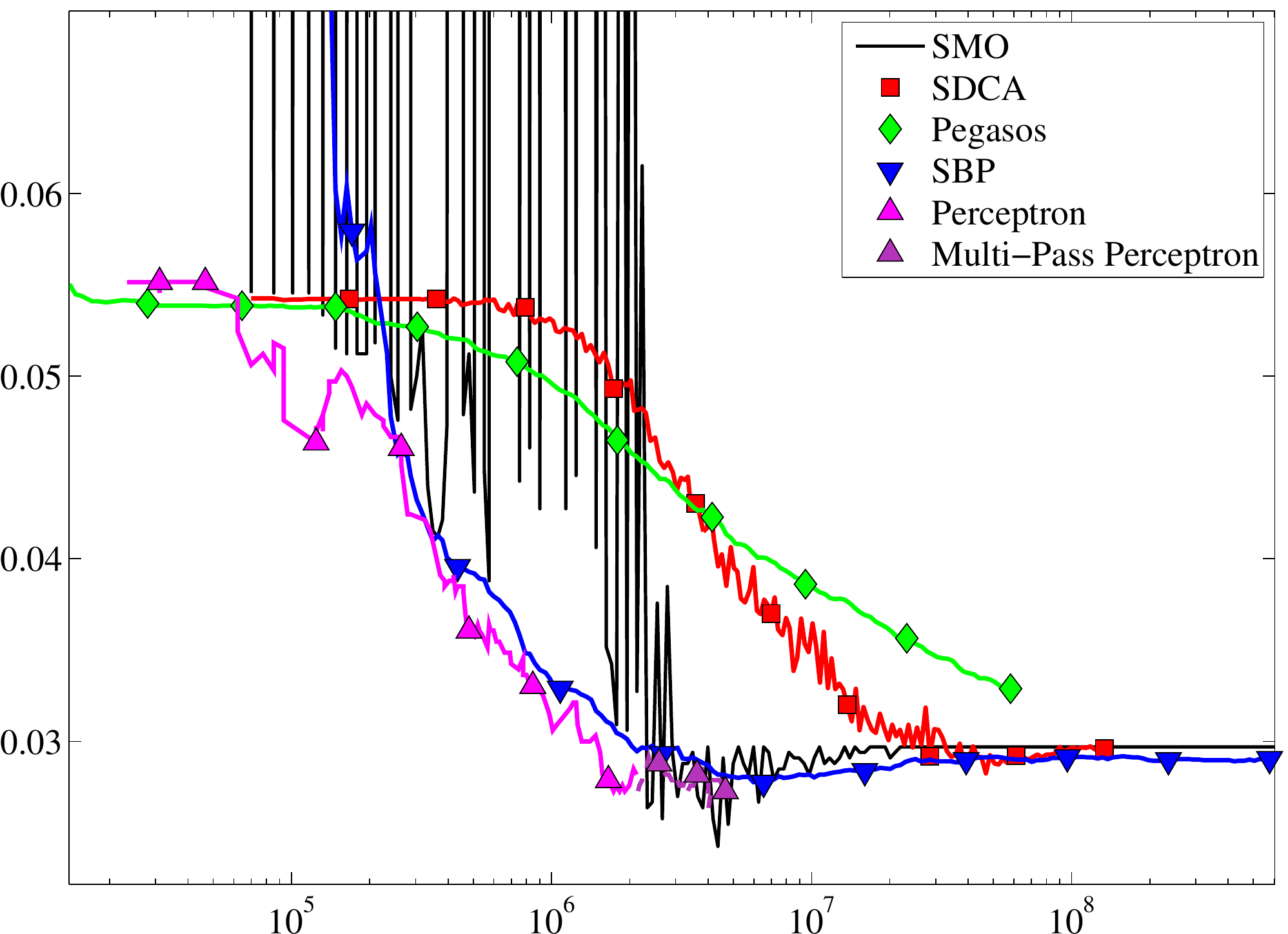} &
\includegraphics[width=0.30\textwidth]{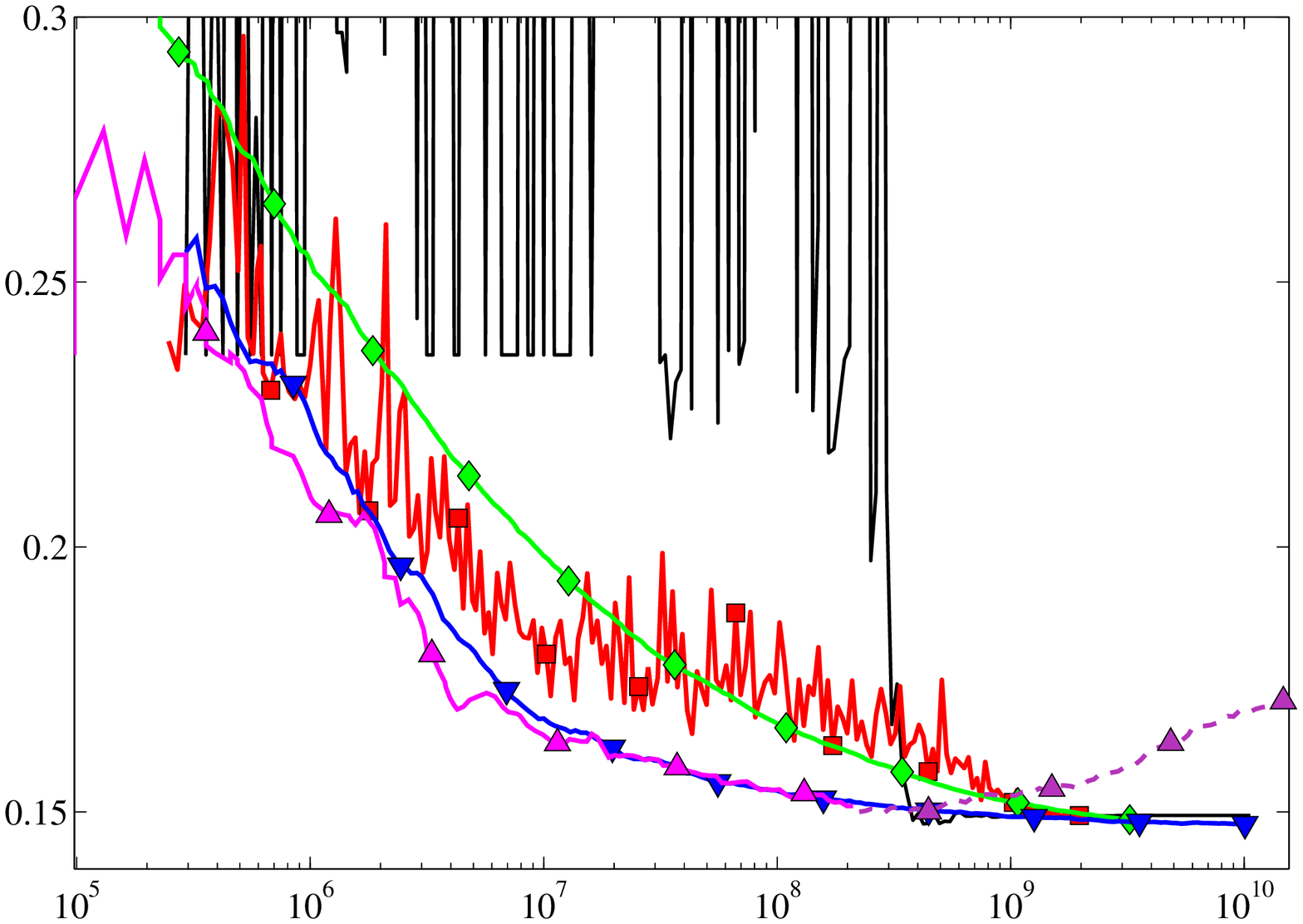} &
\includegraphics[width=0.30\textwidth]{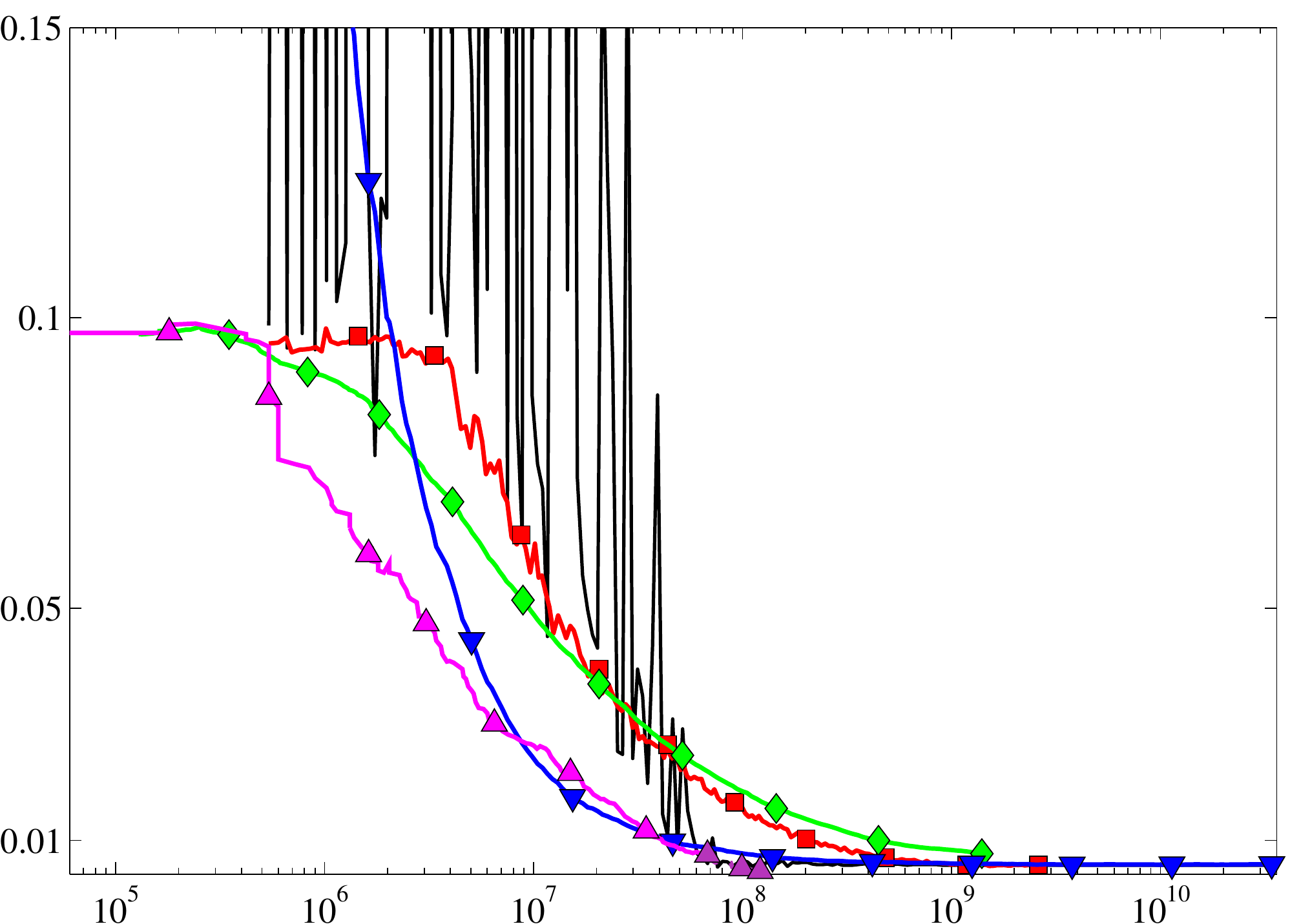} \\
& \scriptsize{Kernel Evaluations} & \scriptsize{Kernel Evaluations} & \scriptsize{Kernel Evaluations} \\
\end{tabular}

\caption{
\small
Classification error on the held-out testing set (linear scale) vs. the number
of kernel evaluations performed during optimization (log scale), averaged over
ten runs. The Perceptron was run for multiple passes over the data---its curve
becomes dashed after the first epoch ($n$ iterations). All algorithms were run
for ten epochs, \emph{except} for Perceptron on Adult, which we ran for $100$
epochs to better illustrate its overfitting.
}

\label{fig:experiments}

\end{figure*}

We first compared the methods on a SVM formulation \emph{without} an
unregularized bias, since Pegasos and SDCA do not naturally handle one. So that
this comparison would be implementation-independent, we measure performance in
terms of the number of kernel evaluations. As can be seen in Figure
\ref{fig:experiments}, the SBP outperforms Pegasos and SDCA, as predicted by
the upper bounds. The SMO algorithm has a dramatically different performance
profile, in line with the known analysis: it makes relatively little progress,
in terms of generalization error, until it reaches a certain critical point,
after which it converges rapidly. Unlike the other methods, terminating SMO
early in order to obtain a cruder solution does not appear to be advisable.

We also compared to the online Perceptron algorithm. Although use of the
Perceptron is justified for non-separable data only if run for a single pass
over the training set, we did continue running for multiple passes.
The Perceptron's generalization performance is similar to that of the SBP for
the first epoch, but the SBP continues improving over additional passes. As
discussed in Section \ref{subsec:comparison-perceptron}, the Perceptron is
unsafe and might overfit after the first epoch, an effect which is clearly
visible on the Adult dataset.

To give a sense of actual runtime, we compared our implementation of the
SBP\footnote{Source code is available from
\url{http://ttic.uchicago.edu/~cotter/projects/SBP}} to the SVM package LIBSVM,
running on an Intel E7500 processor. We allowed an unregularized bias (since
that is what LIBSVM uses), and used the parameters in Table \ref{tab:datasets}.
For these experiments, we replaced the Reuters dataset with the version of the
Forest dataset used by \citet{NguyenMaTaHa10}, using their parameters. LIBSVM
converged to a solution with $14.9$\% error in $195$s on Adult, $0.44$\% in
$1980$s on MNIST, and $1.8$\% in $35$ hours on Forest. In \emph{one-quarter} of
each of these runtimes, SBP obtained $15.0$\% error on Adult, $0.46$\% on
MNIST, and $1.6$\% on Forest. These results of course depend heavily on the
specific stopping criterion used.

\section{Summary and Discussion}\label{sec:conclusion}

The Stochastic Batch Perceptron is a novel approach for training kernelized
SVMs. The SBP fares well empirically, and, as summarized in Table
\ref{tab:bounds}, our runtime guarantee for the SBP is the best of any existing
guarantee for kernelized SVM training. An interesting open question is whether
this runtime is optimal, i.e. whether any algorithm relying only on black-box
kernel accesses must perform $\Omega\left( ( (\mathcal{L}^*+\gendelta)/\gendelta )^3
\norm{u}^4/\gendelta \right)$ kernel evaluations.


As with other stochastic gradient methods, deciding when to terminate SBP
optimization is an open issue. The most practical approach seems to be to
terminate when a holdout error stabilizes. We should note that even for methods
where the duality gap can be used (e.g.~SMO), this criterion is often too
strict, and the use of cruder criteria may improve training time.

%


\setlength{\bibspacing}{0.7em}
{
\small
\paragraph{Acknowledgements:} S. Shalev-Shwartz is supported by the
Israeli Science Foundation grant number 590-10.

\bibliography{main}
\bibliographystyle{icml2012}
}

\clearpage
\appendix
\inappendixtrue
\section{Additional Experiments}\label{app:experiments}

\begin{figure*}[*t]

\centering
\begin{tabular}{ @{} L @{} S @{} S @{} S @{} }
& \large{Reuters} & \large{Adult} & \large{MNIST} \\
\rotatebox{90}{\scriptsize{Test error}} &
\includegraphics[width=0.30\textwidth]{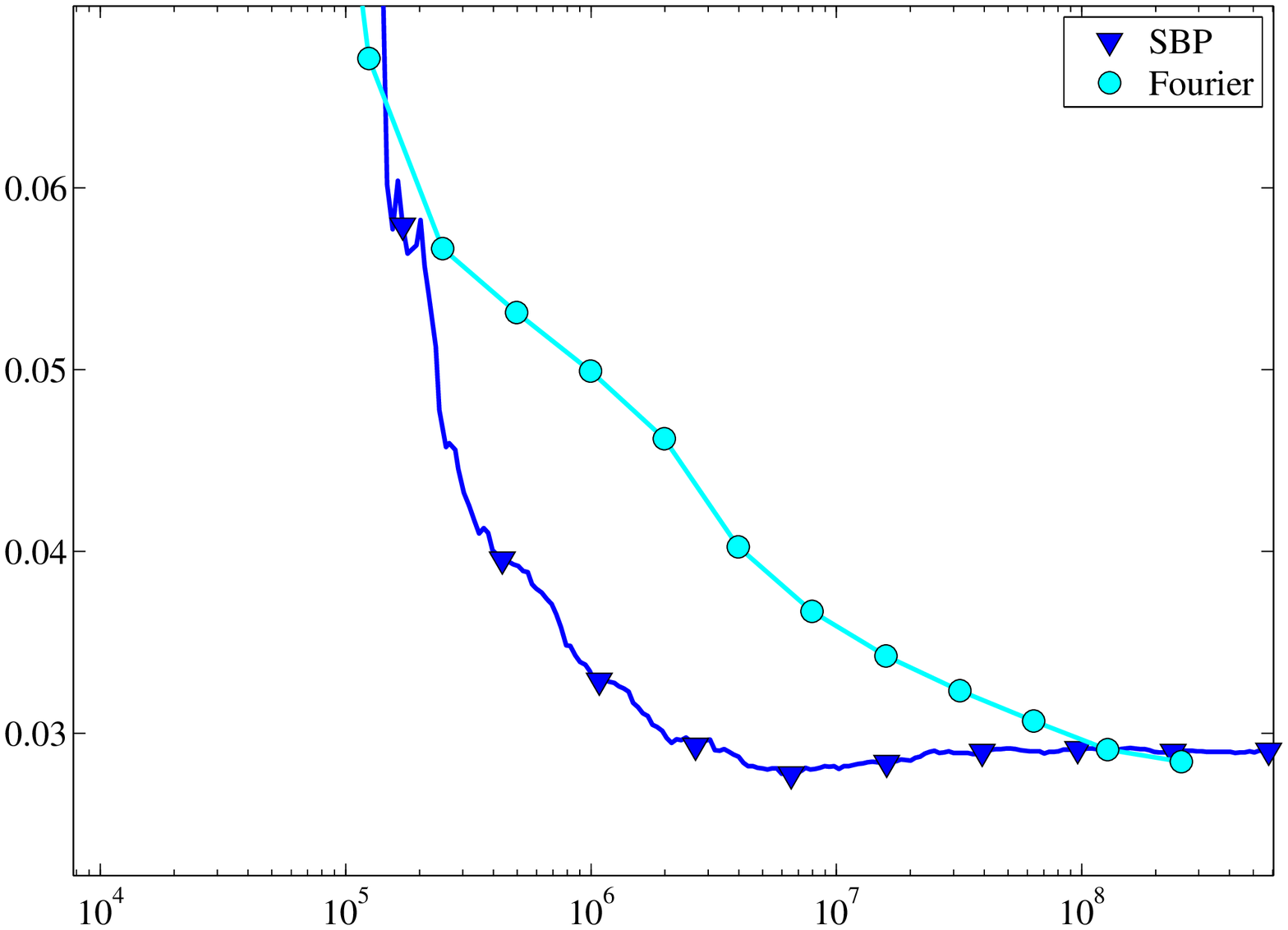} &
\includegraphics[width=0.30\textwidth]{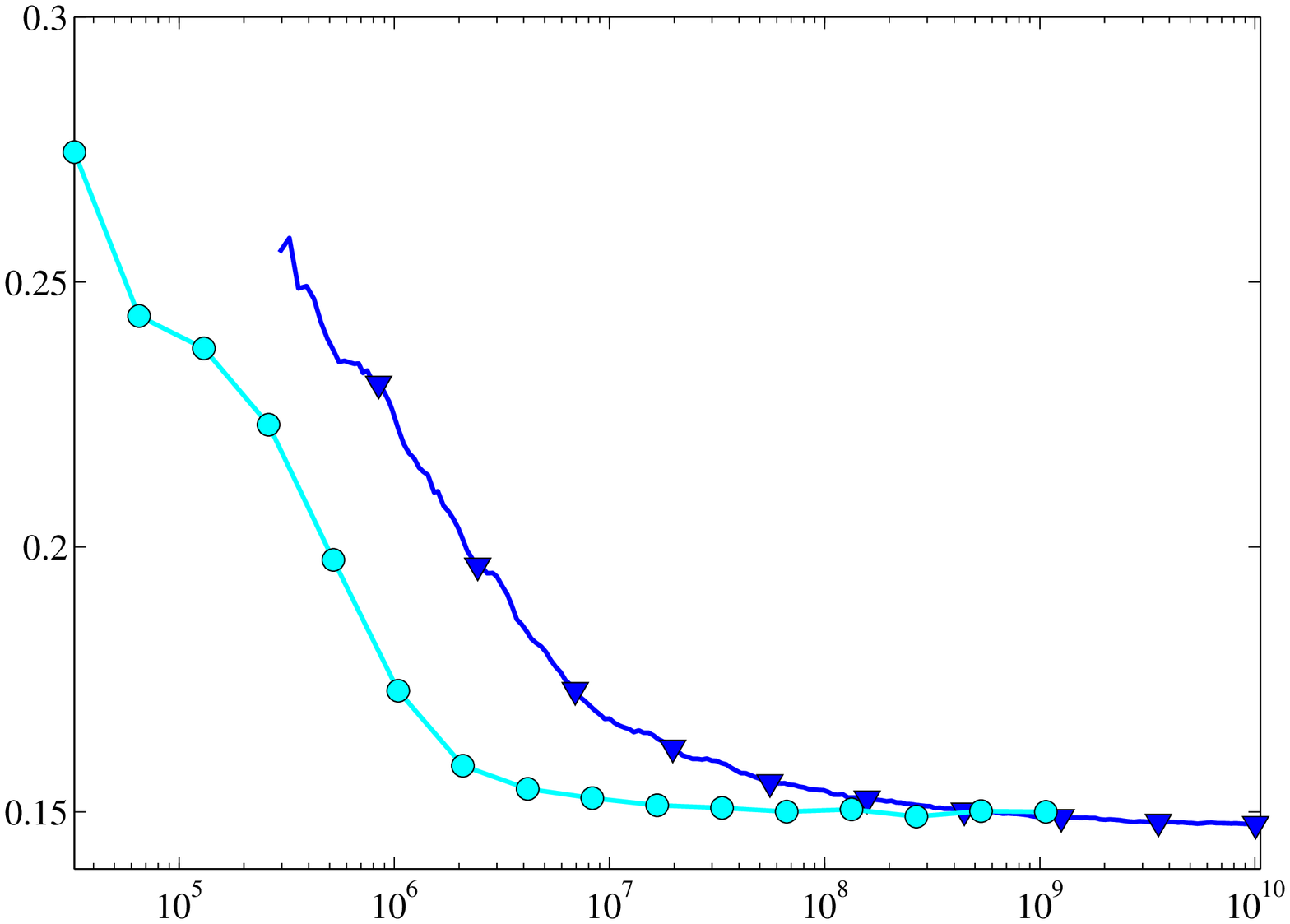} &
\includegraphics[width=0.30\textwidth]{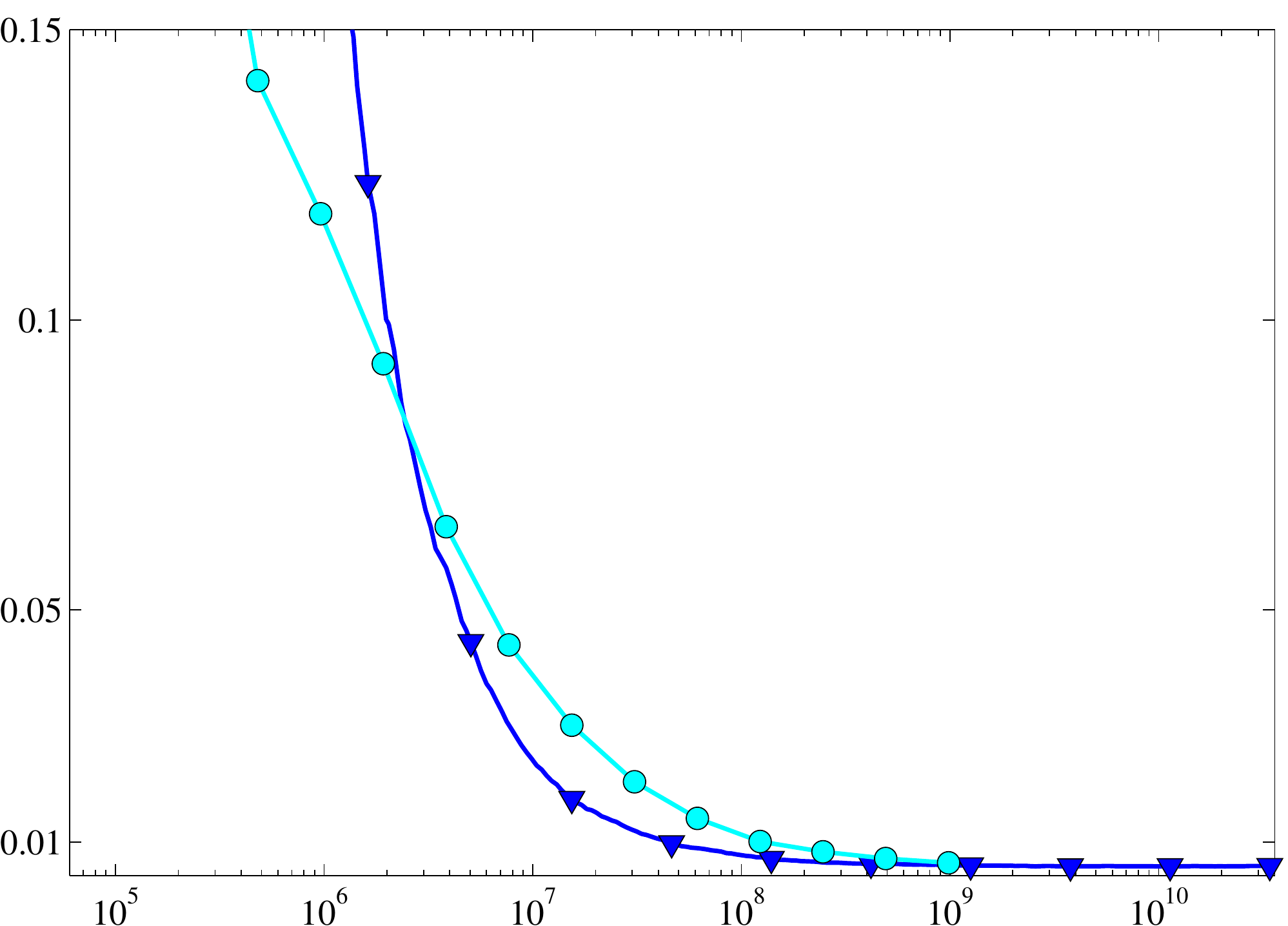} \\
& \scriptsize{Inner Products} & \scriptsize{Inner Products} & \scriptsize{Inner Products} \\
\end{tabular}

\caption{
\small
Classification error on the held-out testing set (linear scale) vs.
computational cost measured in units of $d$-dimensional inner products (where
the training vectors satisfy $x\in\mathbb{R}^{d}$) (log scale), and averaged
over ten runs. For the Fourier features, the computational cost (horizontal
axis) is that of computing $k\in\left\{1,2,4,8,\dots\right\}$ pairs of Fourier
features over the entire training set, while the test error is that of the
\emph{optimal} classifier trained on the resulting linearized SVM objective.
}

\label{fig:fourier-experiments}

\end{figure*}

While our focus in this paper is on optimization of the kernel SVM objective,
and not on the broader problem of large-scale learning, one may wonder how well
the SBP compares to techniques which accelerate the training of kernel SVMs
through \emph{approximation}. One such is the random Fourier projection
algorithm of \citet{RahimiRe07}, which can be used to transform a kernel SVM
problem into an approximately-equivalent linear SVM. The resulting problem may
then be optimized using one of the many existing fast linear SVM solvers, such
as Pegasos, SDCA or SIMBA. Unlike methods (such as the SBP) which rely only on
black-box kernel accesses, Rahimi and Recht's projection technique can only be
applied on a certain class of kernel functions (shift-invariant kernels), of
which the Gaussian kernel is a member.

For $d$-dimensional feature vectors, and using a Gaussian kernel with parameter
$\sigma^2$, Rahimi and Recht's approach is to sample
$v_{1},\dots,v_{k}\in\mathbb{R}^{d}$ independently according to
$v_{i}\sim\mathcal{N}\left(0,I\right)$, and then define the mapping
$\mathcal{P}:\mathbb{R}^{d}\rightarrow\mathbb{R}^{2k}$ as:
\begin{align*}
{\mathcal{P}\left(x\right)}_{2i} = & \frac{1}{\sqrt{k}} \cos\left(
\frac{1}{\sigma} \inner{v_{i}}{x} \right) \\
{\mathcal{P}\left(x\right)}_{2i+1} = & \frac{1}{\sqrt{k}} \sin\left(
\frac{1}{\sigma} \inner{v_{i}}{x} \right)
\end{align*}
Then $\inner{\mathcal{P}\left(x_i\right)}{\mathcal{P}\left(x_j\right)} \approx
K\left(x_i,x_j\right)$, with the quality of this approximation improving with
increasing $k$ (see \citet[Claim 1]{RahimiRe07} for details).

Notice that computing each pair of Fourier features requires computing the
$d$-dimensional inner product $\inner{v}{x}$. For comparison, let us write the
Gaussian kernel in the following form:
\begin{align*}
K\left(x_i,x_j\right) =& \exp\left( -\frac{1}{2\sigma^{2}} \norm{x_i - x_j}^2
\right) \\ =& \exp\left( -\frac{1}{2\sigma^{2}} \left( \norm{x_i}^2 +
\norm{x_j}^2 - 2\inner{x_i}{x_j} \right) \right)
\end{align*}
The norms $\norm{x_i}$ may be cheaply precomputed, so the dominant cost of
performing a single Gaussian kernel evaluation is, likewise, that of the
$d$-dimensional inner product $\inner{x_i}{x_j}$.

This observation suggests that the computational cost of the use of Fourier
features may be directly compared with that of a kernel-evaluation-based SVM
optimizer in terms of $d$-dimensional inner products. Figure
\ref{fig:fourier-experiments} contains such a comparison. In this figure, the
computational cost of a $2k$-dimensional Fourier linearization is taken to be
the cost of computing $\mathcal{P}\left(x_i\right)$ on the entire training set
($kn$ inner products, where $n$ is the number of training examples)---we ignore
the cost of optimizing the resulting linear SVM entirely.  The plotted testing
error is that of the \emph{optimum} of the resulting linear SVM problem, which
approximates the original kernel SVM. We can see that at least on Reuters and
MNIST, the SBP is preferable to (i.e. faster than) approximating the kernel
with random Fourier features.

\section{Implementation Details}\label{app:algorithm}

\begin{algorithm*}[t]

\begin{tabbing}
\textbf{mm}\=mm\=mm\=mm\=mm\=mm\=mm\=mm\=mm\=\kill
\>$\code{optimize}\left( n:\mathbb{N}, x_{1},\dots,x_{n}:\mathbb{R}^{d}, y_{1},\dots,y_{n}:\left\{\pm 1\right\}, T_{0}:\mathbb{N}, T:\mathbb{N}, \nu:\mathbb{R}_{+}, K:\mathbb{R}^{d}\times\mathbb{R}^{d}\rightarrow\mathbb{R}_{+} \right)$\\
\>\textbf{1}\'\>$\eta_{0} := 1 / \sqrt{\max_{i}K\left(x_{i},x_{i}\right)}$;\\
\>\textbf{2}\'\>$\alpha^{(0)} := 0^{n}$; $c^{(0)} := 0^{n}$; $r_{0} := 0$;\\
\>\textbf{3}\'\>$\code{for } t := 1 \code{ to } T$\\
\>\textbf{4}\'\>\>$\eta_{t} := \eta_{0}/\sqrt{t}$;\\
\>\textbf{5}\'\>\>$\gamma := \code{find\_gamma}\left( c^{(t-1)}, n\nu \right)$;\\
\>\textbf{6}\'\>\>$\code{sample } i \sim \code{uniform} \left\{ j : c^{(t-1)}_{j} < \gamma \right\}$;\\
\>\textbf{7}\'\>\>$\alpha^{(t)} := \alpha^{(t-1)} + \eta_{t} e_{i}$;\\
\>\textbf{8}\'\>\>$r_{t}^2 := r_{t-1}^2 + 2 \eta_{t} c^{(t-1)}_{i} + \eta_{t}^{2} K\left(x_{i},x_{i}\right)$;\\
\>\textbf{9}\'\>\>$\code{for } j=1 \code{ to } n$\\
\>\textbf{10}\'\>\>\>$c^{(t)}_{j} := c^{(t-1)}_{j} + \eta_{t} y_{i} y_{j} K\left(x_{i},x_{j}\right)$;\\
\>\textbf{11}\'\>\>$\code{if } \left( r_{t} > 1 \right) \code{ then}$\\
\>\textbf{12}\'\>\>\>$\alpha^{(t)} := \left(1/r_{t}\right) \alpha^{(t)}$; $c^{(t)} := \left(1/r_{t}\right) c^{(t)}$; $r_{t} := 1$;\\
\>\textbf{13}\'\>$\bar{\alpha} := \frac{1}{T} \sum_{t=1}^{T} \alpha^{(t)}$; $\bar{c} := \frac{1}{T} \sum_{t=1}^{T} c^{(t)}$; $\gamma := \code{find\_gamma}\left( \bar{c}, n\nu \right)$;\\
\>\textbf{14}\'\>$\code{return } \bar{\alpha}/\gamma$;
\end{tabbing}

\caption{
\small
Stochastic gradient ascent algorithm for optimizing the kernelized version of
Problem \ref{eq:slack-constrained-objective}, as described in Section
\ref{subsec:slack-constrained-kernel}. Here, $e_{i}$ is the $i$th standard unit
basis vector. The $\code{find\_gamma}$ subroutine finds the ``water level''
$\gamma$ from the vector of responses $c$ and total volume $n\nu$.
}

\label{alg:slack-constrained-sgd}

\end{algorithm*}

\begin{algorithm*}[t]

\begin{tabbing}
\textbf{mm}\=mm\=mm\=mm\=mm\=mm\=mm\=mm\=mm\=\kill
\>$\code{find\_gamma}\left( C:\mathbb{R}^n, n\nu:\mathbb{R} \right)$\\
\>\textbf{1}\'\>$lower := 1$; $upper := n$;\\
\>\textbf{2}\'\>$lower\_max := -\infty$; $lower\_sum := 0$;\\
\>\textbf{3}\'\>$\code{while } lower < upper$\\
\>\textbf{4}\'\>\>$\code{while } lower < upper$;\\
\>\textbf{5}\'\>\>$middle := \code{partition}( C\left[ lower:upper \right] )$;\\
\>\textbf{6}\'\>\>$middle\_max := \max\left( lower\_max, C\left[ lower:\left( middle - 1 \right) \right] \right)$;\\
\>\textbf{7}\'\>\>$middle\_sum := lower\_sum + \sum C\left[ lower:\left( middle - 1 \right) \right]$;\\
\>\textbf{8}\'\>\>$\code{if } middle\_max \cdot \left( middle - 1 \right) - middle\_sum \ge n\nu \code{ then}$\\
\>\textbf{9}\'\>\>\>$upper := middle - 1$;\\
\>\textbf{10}\'\>\>$else$\\
\>\textbf{11}\'\>\>\>$lower := middle$; $lower\_max := middle\_ max$; $lower\_sum := middle\_sum$;\\
\>\textbf{12}\'\>$\code{return } \left( n\nu - lower\_max \cdot \left( lower - 1 \right) + lower\_sum \right) / \left( lower - 1 \right) + lower\_max$;
\end{tabbing}

\caption{
\small
Divide-and-conquer algorithm for finding the ``water level'' $\gamma$ from an
array of responses $C$ and total volume $n\nu$. The $\code{partition}$ function
chooses a pivot value from the array it receives as an argument (the median
would be ideal), places all values less than the pivot at the start of the
array, all values greater at the end, and returns the index of the pivot in the
resulting array.
}

\label{alg:water}

\end{algorithm*}

We begin this appendix by providing complete pseudo-code, which may be found in
Algorithm \ref{alg:slack-constrained-sgd}, for the SBP algorithm which we
outlined in Section \ref{subsec:slack-constrained-sgd}. This implementation
requires that we be able to find a minimax-optimal probability distribution
$p^{*}$ to the objective of Equation \ref{eq:fw-definition}.

As was discussed in Section \ref{subsec:minimax-optimality}, in a problem
without an unregularized bias, such a probability distribution can be derived
from the ``water level'' $\gamma$, which can be found in $O(n)$ time using
Algorithm \ref{alg:water}.
This algorithm works by subdividing the set of responses into those less than,
equal to and greater than a pivot value (if one uses the median, which can be
found in linear time using e.g. the median-of-medians algorithm
\citep{BlumFlPrRiTa73}, then the overall will be linear in $n$). Then, it
calculates the size, minimum and sum of each of these subsets, from which the
total volume of the water required to cover the subsets can be easily
calculated. It then recurses into the subset containing the point at which a
volume of $n\nu$ just suffices to cover the responses, and continues until
$\gamma$ is found.

\begin{figure*}[t]

\centering
\includegraphics[width=0.8\textwidth]{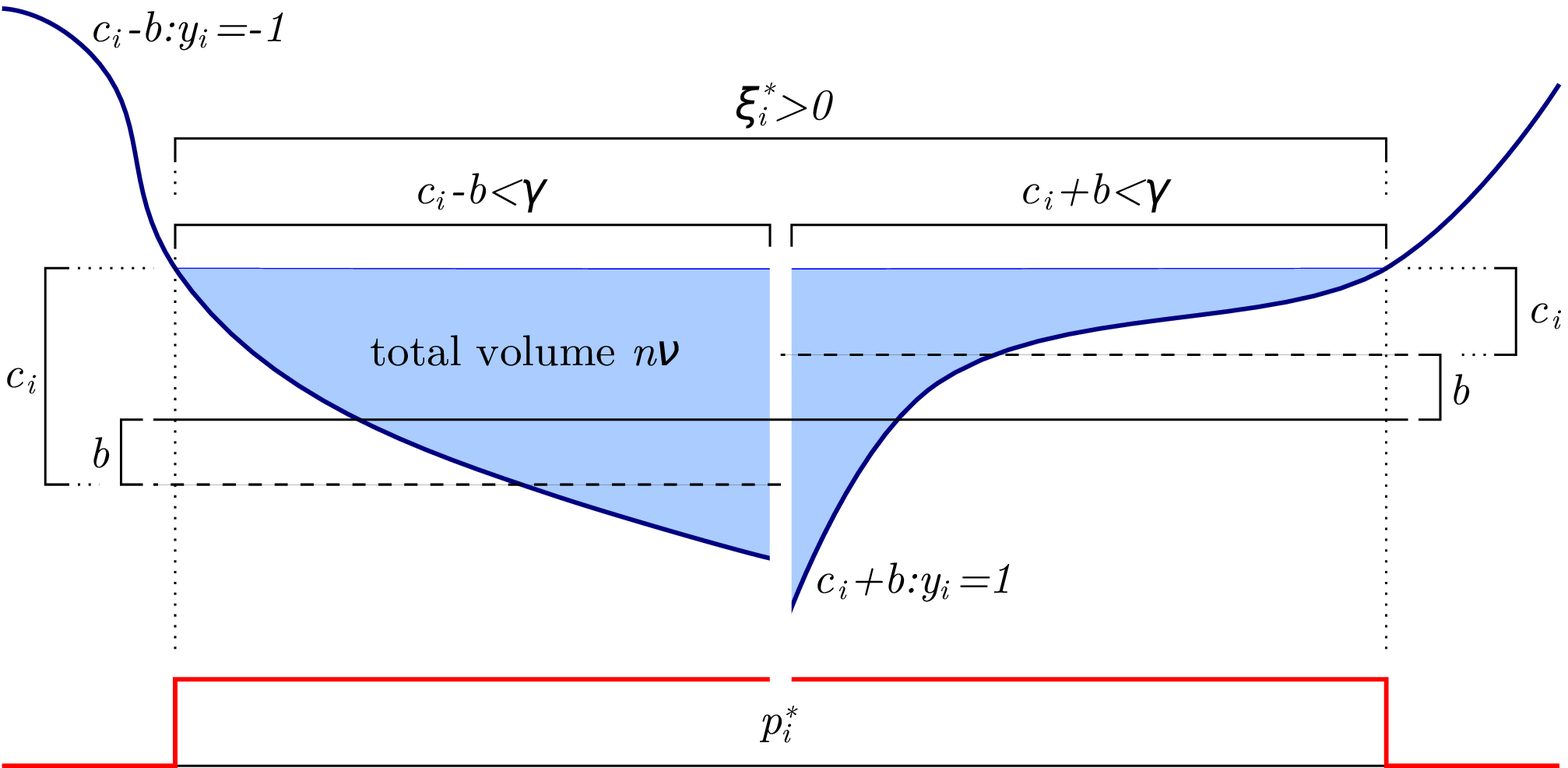}

\caption{
\small
Illustration of how one finds the ``water level'' in a problem with an
unregularized bias. The two curves represent the heights of two basins of
heights $c_{i}-b$ and $c_{i}+b$, corresponding to the negative and positive
examples, respectively, with the bias $b$ determining the relative heights of
the basins. Optimizing over $\xi$ and $p$ corresponds to filling these two
basins with water of total volume $n\nu$ and common water level $\gamma$, while
optimizing $b$ corresponds to ensuring that water covers the same number of
indices in each basin.
}

\label{fig:bias}

\end{figure*}

\begin{algorithm*}[t]

\begin{tabbing}
\textbf{mm}\=mm\=mm\=mm\=mm\=mm\=mm\=mm\=mm\=\kill
\>$\code{find\_gamma\_and\_bias}\left( y:\left\{\pm1\right\}^n, C:\mathbb{R}^n, n\nu:\mathbb{R} \right)$\\
\>\textbf{1}\'\>$C^+ := \{ C[i] : y[i] = +1 \}$; $n^+ := \abs{C^+}$; $lower^+ := 1$; $upper^+ := n^+$; $lower\_max^+ := -\infty$; $lower\_sum^+ := 0$;\\
\>\textbf{2}\'\>$C^- := \{ C[i] : y[i] = -1 \}$; $n^- := \abs{C^-}$; $lower^- := 1$; $upper^- := n^-$; $lower\_max^- := -\infty$; $lower\_sum^- := 0$;\\
\>\textbf{3}\'\>$middle^+ := \code{partition}( C^+\left[ lower^+:upper^+ \right] )$;\\
\>\textbf{4}\'\>$middle^- := \code{partition}( C^-\left[ lower^-:upper^- \right] )$;\\
\>\textbf{5}\'\>$middle\_max^+ := \max\left( C\left[ lower^+:\left( middle^+ - 1 \right) \right] \right)$; $middle\_sum^+ := \sum C\left[ lower^+:\left( middle^+ - 1 \right) \right]$;\\
\>\textbf{6}\'\>$middle\_max^- := \max\left( C\left[ lower^-:\left( middle^- - 1 \right) \right] \right)$; $middle\_sum^- := \sum C\left[ lower^-:\left( middle^- - 1 \right) \right]$;\\
\>\textbf{7}\'\>$\code{while } \left( lower^+ < upper^+ \right) \code{ or } \left( lower^- < upper^- \right)$\\
\>\textbf{8}\'\>\>$direction^+ := 0$; $direction^- := 0$;\\
\>\textbf{9}\'\>\>$\code{if } middle^+ < lower^- \code{ then } direction^+ = 1$;\\
\>\textbf{10}\'\>\>$\code{else if } middle^+ > upper^- \code{ then } direction^+ = -1$;\\
\>\textbf{11}\'\>\>$\code{if } middle^- < lower^+ \code{ then } direction^- = 1$;\\
\>\textbf{12}\'\>\>$\code{else if } middle^- > upper^+ \code{ then } direction^- = -1$;\\
\>\textbf{13}\'\>\>$\code{if } direction^+ = direction^- = 0 \code{ then}$\\
\>\textbf{14}\'\>\>\>$volume^+ := middle\_max^+ \cdot \left( middle^+ - 1 \right) - middle\_sum^+$;\\
\>\textbf{15}\'\>\>\>$volume^- := middle\_max^- \cdot \left( middle^- - 1 \right) - middle\_sum^-$;\\
\>\textbf{16}\'\>\>\>$\code{if } volume^+ + volume^- \ge n\nu \code{ then}$\\
\>\textbf{17}\'\>\>\>\>$\code{if } middle^+ > middle^- \code{ then } direction^+ = -1$;\\
\>\textbf{18}\'\>\>\>\>$\code{else if } middle^- > middle^+ \code{ then } direction^- = -1$;\\
\>\textbf{19}\'\>\>\>\>$\code{else if } upper^+ - lower^+ > upper^- - lower^- \code{ then } direction^+ = -1$;\\
\>\textbf{20}\'\>\>\>\>$\code{else } direction^- = -1$;\\
\>\textbf{21}\'\>\>\>$\code{else}$\\
\>\textbf{22}\'\>\>\>\>$\code{if } middle^+ < middle^- \code{ then } direction^+ = 1$;\\
\>\textbf{23}\'\>\>\>\>$\code{else if } middle^- < middle^+ \code{ then } direction^- = 1$;\\
\>\textbf{24}\'\>\>\>\>$\code{else if } upper^+ - lower^+ > upper^- - lower^- \code{ then } direction^+ = 1$;\\
\>\textbf{25}\'\>\>\>\>$\code{else } direction^- = 1$;\\
\>\textbf{26}\'\>\>$\code{if } direction^+ \ne 0 \code{ then}$\\
\>\textbf{27}\'\>\>\>$\code{if } direction^+ > 0 \code{ then } upper^+ := middle^+ - 1$;\\
\>\textbf{28}\'\>\>\>$\code{else } lower^+ := middle^+$; $lower\_max^+ := middle\_max^+$; $lower\_sum^+ := middle\_sum^+$;\\
\>\textbf{29}\'\>\>\>$middle^+ := \code{partition}( C^+\left[ lower^+:upper^+ \right] )$;\\
\>\textbf{30}\'\>\>\>$middle\_max^+ := \max\left( lower\_max^+, C\left[ lower^+:\left( middle^+ - 1 \right) \right] \right)$;\\
\>\textbf{31}\'\>\>\>$middle\_sum^+ := lower\_sum^+ + \sum C\left[ lower^+:\left( middle^+ - 1 \right) \right]$;\\
\>\textbf{32}\'\>\>$\code{if } direction^- \ne 0 \code{ then}$\\
\>\textbf{33}\'\>\>\>$\code{if } direction^- > 0 \code{ then } upper^- := middle^- - 1$;\\
\>\textbf{34}\'\>\>\>$\code{else } lower^- := middle^-$; $lower\_max^- := middle\_max^-$; $lower\_sum^- := middle\_sum^-$;\\
\>\textbf{35}\'\>\>\>$middle^- := \code{partition}( C^-\left[ lower^-:upper^- \right] )$;\\
\>\textbf{36}\'\>\>\>$middle\_max^- := \max\left( lower\_max^-, C\left[ lower^-:\left( middle^- - 1 \right) \right] \right)$;\\
\>\textbf{37}\'\>\>\>$middle\_sum^- := lower\_sum^- + \sum C\left[ lower^-:\left( middle^- - 1 \right) \right]$;\\
\>\textbf{38}\'\>// at this point $lower^+ = lower^- = upper^+ = upper^-$\\
\>\textbf{39}\'\>$\Delta\gamma := \left( n\nu + lower\_sum^+ + lower\_sum^- \right) / \left( lower^+ - 1 \right) - lower\_max^+ - lower\_max^-$;\\
\>\textbf{40}\'\>$\code{if } lower^+ < n^+ \code{ then } \Delta\gamma^+ := \min\left( \Delta\gamma, C^+[ lower^+ ] - lower\_max^+ \right) \code{ else } \Delta\gamma^+ := \Delta\gamma$;\\
\>\textbf{41}\'\>$\code{if } lower^- < n^- \code{ then } \Delta\gamma^- := \min\left( \Delta\gamma, C^-[ lower^- ] - lower\_max^- \right) \code{ else } \Delta\gamma^- := \Delta\gamma$;\\
\>\textbf{42}\'\>$\gamma^+ := lower\_max^+ + 0.5 \cdot \left( \Delta\gamma + \Delta\gamma^+ -\Delta\gamma^- \right)$; $\gamma^- := lower\_max^- + 0.5 \cdot \left( \Delta\gamma - \Delta\gamma^+ +\Delta\gamma^- \right)$;\\
\>\textbf{43}\'\>$\gamma := 0.5 \cdot \left( \gamma^+ + \gamma^- \right)$; $b := 0.5 \cdot \left( \gamma^- - \gamma^+ \right)$;\\
\>\textbf{44}\'\>$\code{return } \left( \gamma, b \right)$;\\
\end{tabbing}

\caption{
\small
Divide-and-conquer algorithm for finding the ``water level'' $\gamma$ and bias
$b$ from an array of labels $y$, array of responses $C$ and total volume
$n\nu$, for a problem with an unregularized bias. The $\code{partition}$
function is as in Algorithm \ref{alg:water}.
}

\label{alg:water-bias}

\end{algorithm*}

In Section \ref{subsec:unregularized-bias}, we mentioned that a similar result
holds for the objective of Equation \ref{eq:fw-definition-bias}, which adds an
unregularized bias.

As before, finding the water level $\gamma$ reduces to finding minimax-optimal
values of $p^{*}$, $\xi^{*}$ and $b^{*}$. The characterization of such
solutions is similar to that in the case without an unregularized bias.  In
particular, for a fixed value of $b$, we may still think about ``pouring water
into a basin'', except that the height of the basin is now $c_{i}+y_{i}b$,
rather than $c_{i}$.

When $b$ is not fixed it is easier to think of \emph{two} basins, one
containing the positive examples, and the other the negative examples. These
basins will be filled with water of a total volume of $n\nu$, to a common water
level $\gamma$. The relative heights of the two basins are determined by $b$:
increasing $b$ will raise the basin containing the positive examples, while
lowering that containing the negative examples by the same
amount. This is illustrated in Figure \ref{fig:bias}.

It remains only to determine what characterizes a minimax-optimal value of $b$.
Let $k^{+}$ and $k^{-}$ be the number of elements covered by water in the
positive and negative basins, respectively, for some $b$. If $k^{+}>k^{-}$,
then raising the positive basin and lowering the negative basin by the same
amount (i.e.  increasing $b$) will raise the overall water level, showing that
$b$ is not optimal. Hence, for an optimal $b$, water must cover an equal number
of indices in each basin. Similar reasoning shows that an optimal $p^{*}$ must
place equal probability mass on each of the two classes.

Once more, the resulting problem is amenable to a divide-and-conquer approach.
The water level $\gamma$ and bias $b$ will be found in $O(n)$ time by Algorithm
\ref{alg:water-bias}, provided that the $\code{partition}$ function chooses the
median as the pivot.

\section{Proofs of Lemmas \ref{lem:slack-constrained-supergradient} and \ref{lem:zinkevich-batch}}\label{app:proofs}

\begin{proof}

By the definition of $f$, for any $v\in\mathbb{R}^{d}$:
\begin{align*}
\MoveEqLeft f\left(w+v\right) = \\
& \max_{\xi\succeq 0, \mathbf{1}^{T} \xi \le n\nu}\min_{p\in\Delta^{n}}
\sum_{i=1}^{n} p_{i} \left( y_{i}\inner{w+v}{\Phi\left(x_{i}\right)} + \xi_{i}
\right)
\end{align*}
Substituting the particular value $p^{*}$ for $p$ can only increase the RHS,
so:
\begin{align*}
f\left(w+v\right) \le & \max_{\xi\succeq 0, \mathbf{1}^{T} \xi \le n\nu}
\sum_{i=1}^{n} p^{*}_{i} \left( y_{i}\inner{w+v}{\Phi\left(x_{i}\right)} +
\xi_{i} \right) \\
\le & \max_{\xi\succeq 0, \mathbf{1}^{T} \xi \le n\nu} \sum_{i=1}^{n} p^{*}_{i}
\left( y_{i}\inner{w}{\Phi\left(x_{i}\right)} + \xi_{i} \right) \\
& + \sum_{i=1}^{n} p^{*}_{i}y_{i}\inner{v}{\Phi\left(x_{i}\right)}
\end{align*}
Because $p^{*}$ is minimax-optimal at $w$:
\begin{align*}
f\left(w+v\right) \le & f\left(w\right) + \sum_{i=1}^{n}
p^{*}_{i}y_{i}\inner{v}{\Phi\left(x_{i}\right)} \\
\le & f\left(w\right) +
\inner{v}{\sum_{i=1}^{n}p^{*}_{i}y_{i}\Phi\left(x_{i}\right)}
\end{align*}
So $\sum_{i=1}^{n} p^{*}_{i}y_{i}\Phi\left(x_{i}\right)$ is a supergradient of
$f$.
\end{proof}

\begin{proof}

Define $h=-\frac{1}{r}f$, where $f$ is as in Equation \ref{eq:fw-definition}.
Then the stated update rules constitute an instance of Zinkevich's algorithm,
in which steps are taken in the direction of stochastic subgradients $g^{(t)}$
of $h$ at $w^{(t)}=\sum_{i=1}^{n}\alpha_{i}y_{i}\Phi\left(x_{i}\right)$.

The claimed result follows directly from \citet[Theorem 1]{Zinkevich03}
combined with an online-to-batch conversion analysis in the style of
\citet[Lemma 1]{CesaCoGe01}.
\end{proof}

\section{Data-Laden Analyses}\label{app:other-algorithms}

We'll begin by presenting a bound on the sample size $n$ required to guarantee
good generalization performance (in terms of the 0/1 loss) for a classifier
which is $\epsilon$-suboptimal in terms of the empirical hinge loss. The
following result, which follows from \citet[Theorem 1]{SrebroSrTe10}, is a
vital building block of the bounds derived in the remainder of this appendix:

\medskip
\begin{lem}
\label{lem:generalization-from-expected-loss}

Consider the expected 0/1 and hinge losses:
\begin{align*}
\mathcal{L}_{0/1}\left(w\right) &=
\expectation[x,y]{\mathbf{1}_{y\inner{w}{x}\le0}}\\
\mathcal{L}\left(w\right) &=
\expectation[x,y]{\max\left(0,1-y\inner{w}{x}\right)}
\end{align*}
Let $u$ be an arbitrary linear classifier, and suppose that we sample a
training set of size $n$, with $n$ given by the following equation, for
parameters $B \ge \norm{u}$, $\epsilon>0$ and
$\delta\in\left(0,1\right)$:
\begin{equation}
\label{eq:generalization-from-expected-loss-bound} n = \tilde{O}\left( \left(
\frac{\mathcal{L}\left(u\right) + \epsilon}{\epsilon} \right) \frac{ \left( B +
\sqrt{\log\frac{1}{\delta}} \right)^{2} + r B \log\frac{1}{\delta} }{\epsilon}
\right)
\end{equation}
where $r \ge \norm{x}$ is an upper bound on the radius of the data.
Then, with probability $1-\delta$ over the i.i.d.~training sample
$x_{i},y_{i}:i\in\left\{ 1,\dots,n\right\}$, uniformly for all linear
classifiers $w$ satisfying:
\begin{align*}
\norm{w} & \le B \\
\hat{\mathcal{L}}\left(w\right) - \hat{\mathcal{L}}\left(u\right) &
\le \epsilon
\end{align*}
where $\hat{\mathcal{L}}$ is the empirical hinge loss:
\begin{equation*}
\hat{\mathcal{L}}\left(w\right) =
\frac{1}{n}\sum_{i=1}^{n}\max\left(0,1-y_{i}\inner{w}{x_{i}}\right)\\
\end{equation*}
we have that:
\begin{align*}
\hat{\mathcal{L}}\left(u\right) & \le \mathcal{L}\left(u\right) + \epsilon \\
\mathcal{L}_{0/1}\left(w\right) & \le \hat{\mathcal{L}}\left(u\right) + \epsilon
\end{align*}
and in particular that:
\begin{equation*}
\mathcal{L}_{0/1}\left(w\right)\le \mathcal{L}\left(u\right) + 2\epsilon
\end{equation*}

\end{lem}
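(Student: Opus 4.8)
\medskip
\noindent\textbf{Proof idea.} The plan is to derive this from two ingredients: a Bernstein concentration inequality for the \emph{fixed} comparator $u$, and the uniform ``optimistic rate'' generalization bound of \citet[Theorem 1]{SrebroSrTe10} over the ball $\norm{w}\le B$. Since that theorem is stated for smooth losses, whereas neither the $0/1$ loss nor the hinge loss is smooth, I would first fix an auxiliary (necessarily non-convex) smooth loss $\tilde{\ell}\colon\mathbb{R}\to\mathbb{R}_{+}$ squeezed between them, i.e.\ $\mathbf{1}_{a\le 0}\le\tilde{\ell}(a)\le\max(0,1-a)$ for every $a$, with $\tilde{\ell}'$ being $O(1)$-Lipschitz; such an $\tilde{\ell}$ exists, for example $\tilde{\ell}(a)=1-a$ on $a\le 0$, $\tilde{\ell}(a)=(1-a)^{2}(1+a)$ on $0\le a\le 1$, and $\tilde{\ell}(a)=0$ on $a\ge 1$. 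Writing $\tilde{\mathcal{L}},\hat{\tilde{\mathcal{L}}}$ for its expected and empirical risks, the squeeze gives $\mathcal{L}_{0/1}(w)\le\tilde{\mathcal{L}}(w)$ and $\hat{\tilde{\mathcal{L}}}(w)\le\hat{\mathcal{L}}(w)$ for all $w$.

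First I would handle the comparator. Since $\norm{u}\le B$ and $\norm{x}\le r$, the variable $\max(0,1-y\inner{u}{x})$ lies in $[0,1+rB]$ and so has variance at most $(1+rB)\mathcal{L}(u)$; Bernstein's inequality then gives, with probability $1-\delta$, $\hat{\mathcal{L}}(u)-\mathcal{L}(u)\le\sqrt{2(1+rB)\mathcal{L}(u)\log(1/\delta)/n}+(1+rB)\log(1/\delta)/(3n)$, which is $\le\epsilon$ once $n$ exceeds a constant times $\frac{\mathcal{L}(u)+\epsilon}{\epsilon}\cdot\frac{rB\log(1/\delta)}{\epsilon}$ --- the second term of the displayed sample size. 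Hence $\hat{\mathcal{L}}(u)\le\mathcal{L}(u)+\epsilon$, and every $w$ in the stated set obeys $\hat{\mathcal{L}}(w)\le\hat{\mathcal{L}}(u)+\epsilon\le\mathcal{L}(u)+2\epsilon$.

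Next I would invoke \citet[Theorem 1]{SrebroSrTe10} for the $O(1)$-smooth loss $\tilde{\ell}$ on the linear class $\{x\mapsto\inner{w}{x}:\norm{w}\le B\}$, whose empirical Rademacher complexity on $n$ points of norm $\le r$ is at most $rB/\sqrt{n}$ (by Jensen's inequality and $\sum_i\norm{x_i}^2\le nr^2$). Up to logarithmic factors this yields, with probability $1-\delta$, uniformly over $\norm{w}\le B$, a bound of the form $\tilde{\mathcal{L}}(w)\le\hat{\tilde{\mathcal{L}}}(w)+O\bigl(\sqrt{\hat{\tilde{\mathcal{L}}}(w)}\,(rB+\sqrt{\log(1/\delta)})/\sqrt{n}+(r^{2}B^{2}+\log(1/\delta))/n\bigr)$. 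For $w$ in the stated set, $\mathcal{L}_{0/1}(w)\le\tilde{\mathcal{L}}(w)$ and $\hat{\tilde{\mathcal{L}}}(w)\le\hat{\mathcal{L}}(w)\le\mathcal{L}(u)+2\epsilon$ by the previous step; plugging this into the deviation term and requiring $n$ to exceed a constant times $\frac{\mathcal{L}(u)+\epsilon}{\epsilon}\cdot\frac{(rB+\sqrt{\log(1/\delta)})^{2}}{\epsilon}$ --- the first term of the displayed sample size --- forces the whole deviation below $\epsilon$. This gives $\mathcal{L}_{0/1}(w)\le\hat{\mathcal{L}}(w)+\epsilon\le\hat{\mathcal{L}}(u)+O(\epsilon)$, and combined with the first step, $\mathcal{L}_{0/1}(w)\le\mathcal{L}(u)+O(\epsilon)$. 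Taking $n$ to be the larger of the two requirements and union-bounding over the two bad events recovers the claimed sample size (the literal constants ``$+\epsilon$'' and ``$+2\epsilon$'' come from rescaling $\epsilon$ by a fixed factor, harmlessly absorbed into the $\tilde{O}$).

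The step I expect to be the main obstacle is the self-referential form of the optimistic-rate bound: $\hat{\tilde{\mathcal{L}}}(w)$ appears inside its own deviation term, so one cannot read off a sample size directly. The remedy is the bootstrap above --- first establish the crude bound $\hat{\mathcal{L}}(w)=O(\mathcal{L}(u)+\epsilon)$ from the hypothesis together with the Bernstein step, and only then substitute it into the deviation term --- and carrying this out together with the constant and union-bound bookkeeping is the delicate part. A minor secondary point is checking that a smooth loss squeezed between the $0/1$ and hinge losses really exists with an $O(1)$ smoothness constant, which the explicit $\tilde{\ell}$ above settles (one verifies $\tilde{\ell}''\in[-2,4]$ on $[0,1]$ and $\tilde{\ell}''\equiv 0$ elsewhere, with $\tilde{\ell}'$ continuous).
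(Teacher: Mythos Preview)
Your proposal is correct and matches the paper's approach. The paper does not spell out a proof of this lemma at all---it simply states that the result ``follows from \citet[Theorem 1]{SrebroSrTe10}''---and what you have written is precisely a careful unpacking of that implication: a Bernstein bound for the fixed comparator $u$, a smooth surrogate squeezed between the $0/1$ and hinge losses so that the optimistic-rate theorem applies, and the bootstrap of first controlling $\hat{\mathcal{L}}(w)$ crudely before substituting into the deviation term. One cosmetic remark: your Rademacher-based term naturally comes out as $(rB+\sqrt{\log(1/\delta)})^{2}$ rather than the paper's $(B+\sqrt{\log(1/\delta)})^{2}$; this is harmless since in the body of the paper $r\le 1$ is assumed, and in the appendix the discrepancy is absorbed by the $\tilde{O}$ in all downstream uses.
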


In the remainder of this appendix, we will apply the above result to derive 
generalization bounds on the performance of the various algorithms under
consideration, in the data-laden setting.

\subsection{Stochastic Batch Perceptron}\label{subapp:slack-constrained}

We will here present a more careful derivation of the main result of Section
\ref{subsec:slack-constrained-runtime}, bounding the generalization performance
of the SBP.

\begin{thm}
\label{thm:slack-constrained-runtime}

Let $u$ be an arbitrary linear classifier in the RKHS, let $\epsilon>0$ be
given, and suppose that $K\left(x,x\right)\le r^{2}$ with probability $1$.
There exist values of the training size $n$, iteration count $T$ and parameter
$\nu$ such that Algorithm \ref{alg:slack-constrained-sgd} finds a solution $w =
\sum_{i=1}^{n} \alpha_{i}y_{i}\Phi\left(x_{i}\right)$ satisfying:
\begin{equation*}
\mathcal{L}_{0/1}\left(w\right) \le \mathcal{L}\left(u\right) + \epsilon
\end{equation*}
where $\mathcal{L}_{0/1}$ and $\mathcal{L}$ are the expected 0/1 and hinge
losses, respectively, after performing the following number of kernel
evaluations:
\begin{equation*}
\mbox{\#K} = \tilde{O}\left( \left( \frac{\mathcal{L}\left(u\right) +
\epsilon}{\epsilon} \right)^{3} \frac{ r^{3} \norm{u}^{4} }{\epsilon}
\log^{2}\frac{1}{\delta} \right)
\end{equation*}
with the size of the support set of $w$ (the number nonzero elements in
$\alpha$) satisfying:
\begin{equation*}
\mbox{\#S} = O\left( \left( \frac{\mathcal{L}\left(u\right) +
\epsilon}{\epsilon} \right)^{2} r^{2} \norm{u}^{2} \log\frac{1}{\delta} \right)
\end{equation*}
the above statements holding with probability $1-\delta$.

\end{thm}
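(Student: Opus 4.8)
The plan is to assemble the theorem from three pieces that are already available: the generalization bound of Lemma~\ref{lem:generalization-from-expected-loss}, which fixes the sample size $n$; the suboptimality-transfer bound of Lemma~\ref{lem:slack-constrained-suboptimality}, which fixes the parameter $\nu$ and the optimization accuracy $\bar\epsilon$ that is needed; and the stochastic-gradient convergence bound of Lemma~\ref{lem:zinkevich-batch}, which fixes the iteration count $T$. Once $n$, $T$ and $\nu$ are chosen, the kernel-evaluation count will follow from $\#K = O(nT)$ and the support-size bound from $\#S \le T$.

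First I would instantiate Lemma~\ref{lem:generalization-from-expected-loss} with $B = 2\norm{u}$ and error parameter $\epsilon/4$, noting that in the RKHS $\norm{\Phi(x)} = \sqrt{K(x,x)} \le r$. After absorbing lower-order terms (using $r\ge 1$, $\norm{u}\ge 1$) this gives a sample size $n = \tilde{O}\big(((\mathcal{L}(u)+\epsilon)/\epsilon)\, r\norm{u}^2/\epsilon\cdot\log(1/\delta)\big)$ for which, with probability at least $1-\delta/2$, we simultaneously have $\hat{\mathcal{L}}(u)\le\mathcal{L}(u)+\epsilon/4$ and, uniformly over every $w$ with $\norm{w}\le 2\norm{u}$ and $\hat{\mathcal{L}}(w)-\hat{\mathcal{L}}(u)\le\epsilon/4$, the bound $\mathcal{L}_{0/1}(w)\le\mathcal{L}(u)+\epsilon$. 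So it suffices to make the algorithm output such a $w$ on this event.

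Next, mirroring Section~\ref{subsec:slack-constrained-runtime}, I would set $\nu=\hat{\mathcal{L}}(u)/\norm{u}$ and appeal to Lemma~\ref{lem:slack-constrained-suboptimality}: the rescaled iterate $w=\bar w/\gamma$ meets the two conditions above as soon as $\bar w$ is $\bar\epsilon$-suboptimal for Problem~\ref{eq:slack-constrained-objective}, with $\bar\epsilon = \Theta\big(\epsilon/(\norm{u}(\hat{\mathcal{L}}(u)+\epsilon))\big)$, which on the above event is $\Omega\big(\epsilon/(\norm{u}(\mathcal{L}(u)+\epsilon))\big)$. I would then run the same analysis as Lemma~\ref{lem:zinkevich-batch}, now with the gradient-norm bound $r$ in place of $1$ (since $\norm{y_i\Phi(x_i)} = \sqrt{K(x_i,x_i)} \le r$, which is exactly why Algorithm~\ref{alg:slack-constrained-sgd} takes $\eta_0 = 1/r$): the suboptimality of $\bar w$ after $T$ iterations is $O\big(r\sqrt{\log(1/\delta)/T}\big)$ with probability $1-\delta/2$, so taking $T = \tilde{O}\big(r^2\norm{u}^2((\mathcal{L}(u)+\epsilon)/\epsilon)^2\log(1/\delta)\big)$ drives it below $\bar\epsilon$. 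A union bound over the two failure events gives overall probability $1-\delta$, and Lemma~\ref{lem:generalization-from-expected-loss} then yields $\mathcal{L}_{0/1}(w)\le\mathcal{L}(u)+\epsilon$ for the algorithm's output (the rescaled average iterate $w = \bar w/\gamma$).

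It remains to read off the costs. Each iteration performs $O(n)$ kernel evaluations (the response update plus the diagonal term), and the initialization of $\eta_0$ is another $O(n)$, so $\#K = O(nT)$, which multiplies out to the claimed $\tilde{O}\big(((\mathcal{L}(u)+\epsilon)/\epsilon)^3 r^3\norm{u}^4/\epsilon\cdot\log^2(1/\delta)\big)$; and since every iteration turns at most one coordinate of $\alpha$ nonzero while the rescaling step preserves the support, $\#S \le T = O\big(((\mathcal{L}(u)+\epsilon)/\epsilon)^2 r^2\norm{u}^2\log(1/\delta)\big)$. The main thing to get right is the bookkeeping of the $r$- and $\log(1/\delta)$-dependence once the algorithm is run with the rescaled step sizes over the domain $\norm{w}\le 1$ --- in particular that the larger gradient norm makes $r$ enter $T$ quadratically while it enters the sample-size bound only linearly --- and, more delicately, composing the two high-probability events so that the data-dependent choices $\nu$ and $\bar\epsilon$, which are defined through the empirical loss $\hat{\mathcal{L}}(u)$, are genuinely small enough relative to the population quantity $\mathcal{L}(u)$ appearing in the final bound.
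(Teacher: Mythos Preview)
Your proposal is correct and follows essentially the same approach as the paper: fix $n$ via Lemma~\ref{lem:generalization-from-expected-loss}, fix the required optimization accuracy $\bar\epsilon$ (and hence $T$) via Lemma~\ref{lem:slack-constrained-suboptimality} combined with Lemma~\ref{lem:zinkevich-batch}, then read off $\#K = O(nT)$ and $\#S \le T$. The only difference is that the paper's appendix proof introduces the empirical Pareto point $\hat w^* = \argmin{\norm{w}\le\norm{u}}\hat{\mathcal L}(w)$ and chooses $\nu$ relative to $\hat w^*$, whereas you (following the main-text sketch in Section~\ref{subsec:slack-constrained-runtime}) apply Lemma~\ref{lem:slack-constrained-suboptimality} with $u$ directly; since that lemma holds for arbitrary $u\neq 0$, both routes are valid and yield the same bounds after using $\hat{\mathcal L}(\hat w^*)\le\hat{\mathcal L}(u)\le\mathcal L(u)+\epsilon$.
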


\begin{proof}

For a training set of size $n$, where:
\begin{equation*}
n = \tilde{O}\left( \left( \frac{\mathcal{L}\left(u\right) +
\epsilon}{\epsilon} \right) \frac{ r B^{2} }{\epsilon} \log\frac{1}{\delta}
\right)
\end{equation*}
taking $B=2\norm{u}$ in Lemma \ref{lem:generalization-from-expected-loss} gives
that $\hat{\mathcal{L}}\left(u\right) \le \mathcal{L}\left(u\right) + \epsilon$
and $\mathcal{L}_{0/1}\left(w\right) \le \mathcal{L}\left(u\right) + 2\epsilon$
with probability $1-\delta$ over the training sample, uniformly for all linear
classifiers $w$ such that $\norm{w} \le B$ and $\hat{\mathcal{L}}\left(w\right)
- \hat{\mathcal{L}}\left( u \right) \le \epsilon$, where $\hat{\mathcal{L}}$ is
the empirical hinge loss. We will now show that these inequalities are
satisfied by the result of Algorithm \ref{alg:slack-constrained-sgd}. Define:
\begin{equation*}
\hat{w}^{*} = \argmin{w:\norm{w}\le\norm{u}} \hat{\mathcal{L}}\left(w\right)
\end{equation*}
Because $\hat{w}^{*}$ is a Pareto optimal solution of the bi-criterion
objective of Problem \ref{eq:bi-criterion-objective}, if we choose the
parameter $\nu$ to the slack-constrained objective (Problem
\ref{eq:slack-constrained-objective}) such that $\norm{\hat{w}^{*}}\nu =
\hat{\mathcal{L}}\left(\hat{w}^{*}\right)$, then the optimum of the
slack-constrained objective will be equivalent to $\hat{w}^{*}$ (Lemma
\ref{lem:slack-constrained-suboptimality}). As was discussed in Section
\ref{subsec:slack-constrained-runtime}, We will use Lemma
\ref{lem:zinkevich-batch} to find the number of iterations $T$ required to
satisfy Equation \ref{eq:reqbarepsilon} (with $u=\hat{w}^{*}$).  This yields
that, if we perform $T$ iterations of Algorithm
\ref{alg:slack-constrained-sgd}, where $T$ satisfies the following:
\begin{equation}
\label{eq:slack-constrained-time} T \ge O\left( \left(\frac{\hat{ \mathcal{L}
}\left(\hat{w}^{*}\right) + \epsilon }{\epsilon}\right)^2 r^2
\norm{\hat{w}^{*}}^2 \log\frac{1}{\delta} \right)
\end{equation}
then the resulting solution $w=\bar{w}/\gamma$ will satisfy:
\begin{align*}
\norm{w} & \le 2\norm{\hat{w}^{*}} \\
\notag \hat{\mathcal{L}}\left(w\right) -
\hat{\mathcal{L}}\left(\hat{w}^{*}\right) & \le \epsilon
\end{align*}
with probability $1-\delta$. That is:
\begin{align*}
\norm{w} & \le 2 \norm{\hat{w}^{*}} \\
& \le B
\end{align*}
and:
\begin{align*}
\hat{\mathcal{L}}\left(w\right) & \le \hat{\mathcal{L}}\left( \hat{w}^{*}
\right) + \epsilon \\
& \le \hat{\mathcal{L}}\left( u \right) + \epsilon
\end{align*}
These are precisely the bounds on $\norm{w}$ and
$\hat{\mathcal{L}}\left(w\right)$ which we determined (at the start of the
proof) to be necessary to permit us to apply Lemma
\ref{lem:generalization-from-expected-loss}. Each of the $T$ iterations
requires $n$ kernel evaluations, so the product of the bounds on $T$ and $n$
bounds the number of kernel evaluations (we may express Equation
\ref{eq:slack-constrained-time} in terms of $\mathcal{L}\left(u\right)$ and
$\norm{u}$ instead of $\hat{\mathcal{L}}\left(\hat{w}^*\right)$ and
$\norm{\hat{w}^*}$, since $\hat{\mathcal{L}}\left(\hat{w}^*\right) \le
\hat{\mathcal{L}}\left(u\right) \le \mathcal{L}\left(u\right)+\epsilon$ and
$\norm{\hat{w}^*} \le \norm{u}$).

Because each iteration will add at most one new element to the support set, the
size of the support set is bounded by the number of iterations, $T$.

This discussion has proved that we can achieve suboptimality $2\epsilon$ with
probability $1-2\delta$ with the given $\mbox{\#K}$ and $\mbox{\#S}$. Because
scaling $\epsilon$ and $\delta$ by $1/2$ only changes the resulting bounds by
constant factors, these results apply equally well for suboptimality $\epsilon$
with probability $1-\delta$.
\end{proof}

\subsection{Pegasos / SGD on $\hat{\mathcal{L}}$}\label{subapp:pegasos}

If $w$ is the result of a call to the Pegasos algorithm \citep{ShalevSiSrCo10}
without a projection step, then the analysis of \citet[Corollary 7]{KakadeTe09}
permits us to bound the suboptimality relative to an arbitrary reference
classifier $u$, with probability $1-\delta$, as:
\begin{align}
\label{eq:regularized-bound} \MoveEqLeft \left( \frac{\lambda}{2}\norm{w}^{2} +
\hat{\mathcal{L}}\left(w\right) \right) - \left( \frac{\lambda}{2}\norm{u}^{2}
+ \hat{\mathcal{L}}\left(u\right) \right) \le \\
\notag & \frac{84 r^{2} \log T}{\lambda T} \log\frac{1}{\delta}
\end{align}

Equation \ref{eq:regularized-bound} implies that, if one performs the following
number of iterations, then the resulting solution will be
$\epsilon/2$-suboptimal in the \emph{regularized objective}, with probability
$1-\delta$:
\begin{equation*}
T = \tilde{O}\left( \frac{1}{\epsilon} \cdot \frac{r^2}{\lambda}
\log\frac{1}{\delta} \right)
\end{equation*}
Here, $\epsilon$ bounds the suboptimality not of the empirical hinge loss, but
rather of the regularized objective (hinge loss + regularization). Although the
dependence on $1/\epsilon$ is linear, accounting for the $\lambda$ dependence
results in a bound which is not nearly good as the above appears. To see this,
we'll follow \citet{ShalevSr08} by decomposing the suboptimality in the
empirical hinge loss as:
\begin{align*}
\hat{\mathcal{L}}\left( w \right) - \hat{\mathcal{L}}\left( u \right) & =
\frac{\epsilon}{2} - \frac{\lambda}{2}\norm{w}^2 + \frac{\lambda}{2}\norm{u}^2
\\
& \le \frac{\epsilon}{2} + \frac{\lambda}{2}\norm{u}^2
\end{align*}
In order to have both terms bounded by $\epsilon/2$, we choose $\lambda =
\epsilon / \norm{u}^{2}$, which reduces the RHS of the above to $\epsilon$.
Continuing to use this choice of $\lambda$, we next decompose the squared norm
of $w$ as:
\begin{align*}
\frac{\lambda}{2}\norm{w}^2 & = \frac{\epsilon}{2} - \hat{\mathcal{L}}\left( w
\right) + \hat{\mathcal{L}}\left( u \right) + \frac{\lambda}{2}\norm{u}^2 \\
& \le \frac{\epsilon}{2} + \hat{\mathcal{L}}\left( u \right) +
\frac{\lambda}{2}\norm{u}^2 \\
\norm{w}^2 & \le 2 \left( \frac{ \hat{\mathcal{L}}\left( u \right) + \epsilon
}{ \epsilon} \right) \norm{u}^2
\end{align*}
Hence, we will have that:
\begin{align}
\label{eq:regularized-target} \norm{w}^{2} & \le 2 \left( \frac{
\hat{\mathcal{L}}\left(u\right) + \epsilon }{ \epsilon} \right)
\norm{u}^{2} \\
\notag \hat{\mathcal{L}}\left( w \right) - \hat{\mathcal{L}}\left(u\right) &
\le \epsilon
\end{align}
with probability $1-\delta$, after performing the following number of
iterations:
\begin{equation}
\label{eq:regularized-time} T = \tilde{O}\left( \frac{r^2
\norm{u}^2}{\epsilon^{2}} \log\frac{1}{\delta} \right)
\end{equation}

There are two ways in which we will use this bound on $T$ to find bound on the
number of kernel evaluations required to achieve some desired regularization
error. The easiest is to note that the bound of Equation
\ref{eq:regularized-time} exceeds that of Lemma
\ref{lem:generalization-from-expected-loss}, so that if we take $T=n$, then
with high probability, we'll achieve generalization error $2 \epsilon$ after
$Tn=T^{2}$ kernel evaluations:
\begin{equation}
\label{eq:regularized-online-runtime} \mbox{\#K} = \tilde{O}\left(
\frac{r^4 \norm{u}^4}{\epsilon^{4}} \log^{2}\frac{1}{\delta} \right)
\end{equation}
Because we take the number of iterations to be precisely the same as the number
of training examples, this is essentially the online stochastic setting.

Alternatively, we may combine our bound on $T$ with Lemma
\ref{lem:generalization-from-expected-loss}. This yields the following bound on
the generalization error of Pegasos in the data-laden batch setting.

\medskip
\begin{thm}
\label{thm:regularized-runtime}

Let $u$ be an arbitrary linear classifier in the RKHS, let $\epsilon>0$ be
given, and suppose that $K\left(x,x\right)\le r^{2}$ with probability $1$.
There exist values of the training size $n$, iteration count $T$ and parameter
$\nu$ such that kernelized Pegasos finds a solution $w = \sum_{i=1}^{n}
\alpha_{i}y_{i}\Phi\left(x_{i}\right)$ satisfying:
\begin{equation*}
\mathcal{L}_{0/1}\left(w\right) \le \mathcal{L}\left(u\right) + \epsilon
\end{equation*}
where $\mathcal{L}_{0/1}$ and $\mathcal{L}$ are the expected 0/1 and hinge
losses, respectively, after performing the following number of kernel
evaluations:
\begin{equation*}
\mbox{\#K} = \tilde{O}\left( \left( \frac{\mathcal{L}\left(u\right) +
\epsilon}{\epsilon} \right)^{2} \frac{ r^{3} \norm{u}^{4} }{\epsilon^{3}}
\log^{2}\frac{1}{\delta} \right)
\end{equation*}
with the size of the support set of $w$ (the number nonzero elements in
$\alpha$) satisfying:
\begin{equation*}
\mbox{\#S} = \tilde{O}\left( \frac{r^{2} \norm{u}^{2}}{\epsilon^2}
\log\frac{1}{\delta} \right)
\end{equation*}
the above statements holding with probability $1-\delta$.

\end{thm}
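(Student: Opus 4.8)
The plan is to mirror the proof of Theorem~\ref{thm:slack-constrained-runtime}, combining the optimization guarantee for kernelized Pegasos already assembled above (Equations~\ref{eq:regularized-target} and~\ref{eq:regularized-time}, obtained with the choice $\lambda = \epsilon/\norm{u}^{2}$) with the data-laden generalization bound of Lemma~\ref{lem:generalization-from-expected-loss}. First I would fix the norm radius $B$ to be plugged into Lemma~\ref{lem:generalization-from-expected-loss}. Since Equation~\ref{eq:regularized-target} bounds $\norm{w}$ only in terms of the random quantity $\hat{\mathcal{L}}(u)$, I would instead set $B^{2} = O\!\left(\frac{\mathcal{L}(u)+\epsilon}{\epsilon}\right)\norm{u}^{2}$, chosen large enough that $B \ge \norm{u}$ and that $\norm{w}^{2} \le B^{2}$ holds whenever $\hat{\mathcal{L}}(u) \le \mathcal{L}(u)+\epsilon$. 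Feeding this $B$ into Lemma~\ref{lem:generalization-from-expected-loss}, and simplifying its sample-size expression to $\tilde{O}\!\left(\left(\frac{\mathcal{L}(u)+\epsilon}{\epsilon}\right)\frac{rB^{2}}{\epsilon}\log\frac{1}{\delta}\right)$ exactly as in the SBP proof, gives the required training size $n = \tilde{O}\!\left(\left(\frac{\mathcal{L}(u)+\epsilon}{\epsilon}\right)^{2}\frac{r\norm{u}^{2}}{\epsilon}\log\frac{1}{\delta}\right)$.

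Next I would invoke the Pegasos iteration count: by Equation~\ref{eq:regularized-time}, after $T = \tilde{O}\!\left(\frac{r^{2}\norm{u}^{2}}{\epsilon^{2}}\log\frac{1}{\delta}\right)$ iterations the output $w$ satisfies, with probability $1-\delta$, both $\hat{\mathcal{L}}(w) - \hat{\mathcal{L}}(u) \le \epsilon$ and the norm bound of Equation~\ref{eq:regularized-target}. A union bound then produces an event of probability $1-2\delta$ on which simultaneously (i)~the uniform generalization guarantee of Lemma~\ref{lem:generalization-from-expected-loss} holds for every $w$ with $\norm{w}\le B$ and $\hat{\mathcal{L}}(w)-\hat{\mathcal{L}}(u)\le\epsilon$, (ii)~$\hat{\mathcal{L}}(u)\le\mathcal{L}(u)+\epsilon$ (also a conclusion of that lemma), and (iii)~the Pegasos output obeys Equation~\ref{eq:regularized-target}. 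On this event, (ii) and (iii) together force $\norm{w}\le B$, so (i) applies to the Pegasos output and yields $\mathcal{L}_{0/1}(w)\le\mathcal{L}(u)+2\epsilon$.

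It then remains to count kernel evaluations. As discussed in Section~\ref{sec:comparison}, a kernelized Pegasos iteration costs $O(n)$ kernel evaluations (updating all responses, equivalently touching all support vectors), so $\mbox{\#K} = nT = \tilde{O}\!\left(\left(\frac{\mathcal{L}(u)+\epsilon}{\epsilon}\right)^{2}\frac{r^{3}\norm{u}^{4}}{\epsilon^{3}}\log^{2}\frac{1}{\delta}\right)$, and since at most one support vector is added per iteration, $\mbox{\#S} \le T = \tilde{O}\!\left(\frac{r^{2}\norm{u}^{2}}{\epsilon^{2}}\log\frac{1}{\delta}\right)$. Finally, replacing $\epsilon$ by $\epsilon/2$ and $\delta$ by $\delta/2$ changes these bounds by only constant factors and converts the ``suboptimality $2\epsilon$ with probability $1-2\delta$'' conclusion into the stated one.

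The step I expect to take the most care is the first: because Equation~\ref{eq:regularized-target} is phrased in terms of the random $\hat{\mathcal{L}}(u)$, I must commit to $B$ (hence to $n$) before drawing the sample and then verify a posteriori that the output really lies in the radius-$B$ ball. The clean resolution is to over-provision $B$ by the constant factor implicit in $\hat{\mathcal{L}}(u)\le\mathcal{L}(u)+\epsilon$, which is itself one of the conclusions of Lemma~\ref{lem:generalization-from-expected-loss}, so no extra concentration argument is required; everything else is bookkeeping identical to the proof of Theorem~\ref{thm:slack-constrained-runtime}.
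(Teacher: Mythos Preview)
Your proposal is correct and follows exactly the approach the paper intends: its entire proof reads ``Same proof technique as in Theorem~\ref{thm:slack-constrained-runtime},'' and you have correctly unpacked that technique, including the key adjustment of inflating $B$ by the factor $\sqrt{(\mathcal{L}(u)+\epsilon)/\epsilon}$ to absorb the random $\hat{\mathcal{L}}(u)$ appearing in Equation~\ref{eq:regularized-target} (this is precisely the source of the extra $(\mathcal{L}(u)+\epsilon)/\epsilon$ factor the paper remarks on immediately after the theorem). Your handling of the circularity---committing to $B$ in terms of $\mathcal{L}(u)$ and then using the $\hat{\mathcal{L}}(u)\le\mathcal{L}(u)+\epsilon$ conclusion of Lemma~\ref{lem:generalization-from-expected-loss} to close the loop---is the right resolution and requires no additional concentration argument.
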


\begin{proof}

Same proof technique as in Theorem \ref{thm:slack-constrained-runtime}.
\end{proof}

Because of the extra term in the bound on $\norm{w}$ in Equation
\ref{eq:regularized-target}, theorem \ref{thm:regularized-runtime} gives a
bound which is worse by a factor of
$\left(\mathcal{L}\left(u\right)+\epsilon\right) / \epsilon$ than what we might
have hoped to recover.  When $\epsilon \ll \mathcal{L}\left(u\right)$, this
extra factor results in the bound going as $1 / \epsilon^{5}$ rather than $1 /
\epsilon^{4}$. We need to use Equation \ref{eq:regularized-online-runtime} to
get a $1 / \epsilon^{4}$ bound in this case.

Although this bound on the generalization performance of Pegasos is not quite
what we expected, for the related algorithm which performs SGD on the following
objective:
\begin{align*}
\min_{w\in\mathbb{R}^{d}} &
\frac{1}{n}\sum_{i=1}^{n}\ell\left(y_i\inner{w}{\Phi\left(x_{i}\right)}\right)
\\
\mbox{subject to: } & \norm{w}^2 \le B^2 \\
\end{align*}
the same proof technique yields the desired bound (i.e. without the extra
$\left(\mathcal{L}\left(u\right)+\epsilon\right)/\epsilon$ factor). This is the
origin of the ``SGD on $\hat{\mathcal{L}}$'' row in Table \ref{tab:bounds}.

\subsection{Perceptron}\label{subapp:perceptron}

Analysis of the venerable online Perceptron algorithm is typically presented as
a bound on the number of mistakes made by the algorithm in terms of the hinge
loss of the best classifier---this is precisely the form which we consider in
this document, despite the fact that the online Perceptron does not optimize
any scalarization of the bi-criterion SVM objective of Problem
\ref{eq:bi-criterion-objective}.  Interestingly, the performance of the
Perceptron matches that of the SBP, as is shown in the following theorem:

\medskip
\begin{thm}
\label{thm:perceptron-runtime}

Let $u$ be an arbitrary linear classifier in the RKHS, let $\epsilon>0$ be
given, and suppose that $K\left(x,x\right)\le r^{2}$ with probability $1$.
There exists a value of the training size $n$ such that when the Perceptron
algorithm is run for a single ``pass'' over the dataset, the result is a
solution $w = \sum_{i=1}^{n} \alpha_{i}y_{i}\Phi\left(x_{i}\right)$ satisfying:
\begin{equation*}
\mathcal{L}_{0/1}\left(w\right) \le \mathcal{L}\left(u\right) + \epsilon
\end{equation*}
where $\mathcal{L}_{0/1}$ and $\mathcal{L}$ are the expected 0/1 and hinge
losses, respectively, after performing the following number of kernel
evaluations:
\begin{equation*}
\mbox{\#K} = \tilde{O}\left( \left( \frac{\mathcal{L}\left(u\right) +
\epsilon}{\epsilon} \right)^{3} \frac{ r^{4} \norm{u}^{4} }{\epsilon}
\frac{1}{\delta} \right)
\end{equation*}
with the size of the support set of $w$ (the number nonzero elements in
$\alpha$) satisfying:
\begin{equation*}
\mbox{\#S} = O\left( \left( \frac{\mathcal{L}\left(u\right) +
\epsilon}{\epsilon} \right)^{2} r^{2} \norm{u}^{2} \frac{1}{\delta} \right)
\end{equation*}
the above statements holding with probability $1-\delta$.

\end{thm}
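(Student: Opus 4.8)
The plan is to run the online Perceptron on a fresh i.i.d.\ sample of suitably chosen size $n$, bound its number of mistakes by the classical Perceptron mistake bound, and convert the resulting online guarantee into a generalization guarantee via an online-to-batch conversion. The skeleton mirrors the proof of Theorem~\ref{thm:slack-constrained-runtime}, with the optimization-accuracy step replaced by a mistake-bound step. First I would recall that, since $\norm{\Phi(x_t)}^2 = K(x_t,x_t)\le r^2$ almost surely, for \emph{any} reference classifier $u$ in the RKHS the number of updates $M$ made during a single pass over $(x_1,y_1),\dots,(x_n,y_n)$ satisfies
\[
M \;\le\; D_u + r\norm{u}\sqrt{D_u} + r^2\norm{u}^2, \qquad D_u := \textstyle\sum_{t=1}^{n}\ell\!\left(y_t\inner{u}{\Phi(x_t)}\right)
\]
(e.g.\ \citet[Corollary~5]{Shalev07}). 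Taking expectations over the draw of the sample, $\expectation{D_u}=n\,\mathcal{L}(u)$ and, by concavity of $\sqrt{\cdot}$, $\expectation{\sqrt{D_u}}\le\sqrt{n\,\mathcal{L}(u)}$, so $\expectation{M}\le n\,\mathcal{L}(u)+r\norm{u}\sqrt{n\,\mathcal{L}(u)}+r^2\norm{u}^2$.

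Next I would choose the sample size to be $n=\tilde O\!\big(\big(\tfrac{\mathcal{L}(u)+\epsilon}{\epsilon}\big)\tfrac{r^2\norm{u}^2}{\epsilon}\big)$, exactly the scale appearing in Lemma~\ref{lem:generalization-from-expected-loss}. For the two complexity bounds, note that each Perceptron update creates exactly one support vector, so $\mbox{\#S}=M$, and that evaluating the margin $\inner{w^{(t)}}{\Phi(x_t)}=\sum_j\alpha_j y_j K(x_j,x_t)$ on round $t$ touches only the current support set (of size $\le M$), so $\mbox{\#K}\le nM$. Applying Markov's inequality to $\expectation{M}$ gives, with probability $1-\delta/2$, $M\le\tfrac{2}{\delta}\big(n\mathcal{L}(u)+r\norm{u}\sqrt{n\mathcal{L}(u)}+r^2\norm{u}^2\big)$; substituting the chosen $n$ and using $\mathcal{L}(u)\le\mathcal{L}(u)+\epsilon$ and $\tfrac{\mathcal{L}(u)+\epsilon}{\epsilon}\ge 1$, every term is $O\!\big(r^2\norm{u}^2(\tfrac{\mathcal{L}(u)+\epsilon}{\epsilon})^2\big)$, which yields $\mbox{\#S}=O\!\big((\tfrac{\mathcal{L}(u)+\epsilon}{\epsilon})^2 r^2\norm{u}^2\,\tfrac1\delta\big)$ and $\mbox{\#K}\le nM=\tilde O\!\big((\tfrac{\mathcal{L}(u)+\epsilon}{\epsilon})^3\tfrac{r^4\norm{u}^4}{\epsilon}\,\tfrac1\delta\big)$, as claimed.

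For the generalization guarantee I would invoke the online-to-batch conversion of \citet{CesaCoGe01}: the run produces predictors $w^{(1)},\dots,w^{(n)}$, the conditional probability that round $t$ is a mistake equals $\mathcal{L}_{0/1}(w^{(t)})$, and a predictor $w$ selected from this sequence by the penalized rule of \citet{CesaCoGe01} satisfies, with probability $1-\delta/2$, $\mathcal{L}_{0/1}(w)\le \tfrac{M}{n}+\tilde O\!\big(\sqrt{\tfrac1n\log\tfrac1\delta}\big)$. Here $M/n$ is controlled, again with high probability, by concentrating $D_u$ (a sum of $n$ i.i.d.\ terms in $[0,1+r\norm{u}]$) around $n\mathcal{L}(u)$ and plugging into the deterministic mistake bound, giving $M/n\le\mathcal{L}(u)+r\norm{u}\sqrt{\mathcal{L}(u)/n}+r^2\norm{u}^2/n+\tilde O\!\big((1+r\norm{u})\sqrt{\log(1/\delta)/n}\big)$. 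With $n$ as chosen above, each of these extra terms is $O(\epsilon)$, so $\mathcal{L}_{0/1}(w)\le\mathcal{L}(u)+\epsilon$. Finally, a union bound over the (at most three) failure events together with a rescaling of $\epsilon$ and $\delta$ by constants, exactly as at the close of the proof of Theorem~\ref{thm:slack-constrained-runtime}, gives the theorem.

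The step I expect to be the main obstacle is the online-to-batch conversion. Because the Perceptron does not optimize any scalarization of the bi-criterion objective, one cannot, as in Theorems~\ref{thm:slack-constrained-runtime} and~\ref{thm:regularized-runtime}, route through Lemma~\ref{lem:generalization-from-expected-loss} applied to a predictor with controlled empirical hinge loss; instead the mistake bound --- which, because of the $\sqrt{D_u}$ term and the concavity step, most naturally yields only a bound on $\expectation{M}$ --- must be turned into a \emph{high-probability} generalization bound for a \emph{single} output predictor. Obtaining a genuine high-probability statement here (an in-expectation bound does not by itself imply the stated guarantee), and keeping the $\delta$-dependence bookkeeping consistent ($\log\tfrac1\delta$ for the generalization step versus the $\tfrac1\delta$ that Markov contributes to $\mbox{\#S}$ and $\mbox{\#K}$), is the delicate part; the rest is the textbook Perceptron analysis with the sample size plugged in.
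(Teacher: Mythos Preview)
Your proposal is correct, and the skeleton---Perceptron mistake bound from \citet{Shalev07}, choice of $n$ at the scale of Lemma~\ref{lem:generalization-from-expected-loss}, $\mbox{\#S}=M$, $\mbox{\#K}\le nM$, and Markov on $M$ for the $1/\delta$ factor in the complexity bounds---matches the paper's proof exactly.

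Where you diverge is precisely at the step you flag as the obstacle, the online-to-batch conversion. The paper does something cruder than what you propose: it uses the \emph{sampling} conversion, drawing $w\sim\mathrm{Unif}\{w^{(1)},\dots,w^{(n)}\}$, which immediately gives the in-expectation bound $\expectation{\mathcal{L}_{0/1}(w)}\le \mathcal{L}(u)+\epsilon$ from the mistake bound divided by $n$; it then applies Markov's inequality once at the very end to upgrade everything to a high-probability statement, and this is the sole source of the $1/\delta$ in all three bounds. Your route instead uses the penalized selection rule of \citet{CesaCoGe01} together with a separate concentration of $D_u$ around $n\mathcal{L}(u)$, reserving Markov only for $M$. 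The paper's approach buys brevity (no martingale concentration, no extra union-bound bookkeeping); yours buys a genuinely high-probability generalization statement with $\log(1/\delta)$ dependence in that part and a cleaner account of why the $1/\delta$ lands only in $\mbox{\#S}$ and $\mbox{\#K}$. The paper in fact remarks, immediately after its proof, that a conversion of the kind you use would sharpen the $\delta$-dependence---so your ``obstacle'' is real, and you have resolved it in the direction the authors themselves suggest.
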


\begin{proof}

If we run the online Perceptron algorithm for a single pass over the dataset,
then Corollary 5 of \cite{Shalev07} gives the following mistake bound, for
$\mathcal{M}$ being the set of iterations on which a mistake is made:
\begin{align}
\label{eq:perceptron-starting-bound} \MoveEqLeft \abs{\mathcal{M}} \le
\sum_{i\in\mathcal{M}}\ell\left(y_{i}\inner{u}{\Phi\left(x_{i}\right)}\right) \\
\nonumber & + r \norm{u}
\sqrt{\sum_{i\in\mathcal{M}}\ell\left(y_{i}\inner{u}{\Phi\left(x_{i}\right)}\right)} + r^{2}
\norm{u}^{2} \\
\MoveEqLeft \nonumber \sum_{i=1}^{n}\ell_{0/1}\left(y_{i}\inner{w_{i}}{\Phi\left(x_{i}\right)}\right)
\le \sum_{i=1}^{n}\ell\left(y_{i}\inner{u}{\Phi\left(x_{i}\right)}\right) + \\
\nonumber & + r \norm{u} \sqrt{\sum_{i=1}^{n}\ell\left(y_{i}\inner{u}{\Phi\left(x_{i}\right)}\right)} +
r^{2} \norm{u}^{2}
\end{align}
Here, $\ell$ is the hinge loss and $\ell_{0/1}$ is the 0/1 loss. Dividing
through by $n$:
\begin{align*}
\MoveEqLeft \frac{1}{n}\sum_{i=1}^{n}\ell_{0/1}\left(y_{i}\inner{w_{i}}{\Phi\left(x_{i}\right)}\right)
\le \frac{1}{n}\sum_{i=1}^{n}\ell\left(y_{i}\inner{u}{\Phi\left(x_{i}\right)}\right) \\
\notag & + \frac{r
\norm{u}}{\sqrt{n}}\sqrt{\frac{1}{n}\sum_{i=1}^{n}\ell\left(y_{i}\inner{u}{\Phi\left(x_{i}\right)}\right)}
+ \frac{r^{2} \norm{u}^{2}}{n}
\end{align*}
If we suppose that the $x_{i},y_{i}$s are i.i.d., and that
$w\sim\mbox{Unif}\left(w_{1},\dots,w_{n}\right)$ (this is a ``sampling''
online-to-batch conversion), then:
\begin{equation*}
\expectation{\mathcal{L}_{0/1}\left(w\right)} \le \mathcal{L}\left(u\right) + \frac{r
\norm{u}}{\sqrt{n}}\sqrt{\mathcal{L}\left(u\right)} + \frac{r^{2}
\norm{u}^{2}}{n}
\end{equation*}
Hence, the following will be satisfied:
\begin{equation}
\label{eq:perceptron-bound} \expectation{\mathcal{L}_{0/1}\left(w\right)} \le
\mathcal{L}\left(u\right)+\epsilon
\end{equation}
when:
\begin{equation*}
n\le
O\left(\left(\frac{\mathcal{L}\left(u\right)+\epsilon}{\epsilon}\right)\frac{r^{2}
\norm{u}^{2}}{\epsilon}\right)
\end{equation*}
The expectation is taken over the random sampling of $w$. The number of kernel
evaluations performed by the $i$th iteration of the Perceptron will be equal to
the number of mistakes made before iteration $i$.  This quantity is upper
bounded by the total number of mistakes made over $n$ iterations, which is
given by the mistake bound of equation \ref{eq:perceptron-starting-bound}:
\begin{align*}
\abs{\mathcal{M}}\le & n\mathcal{L}\left(u\right) +
r\norm{u}\sqrt{n\mathcal{L}\left(u\right)} + r^{2}\norm{u}^{2}\\
\le &
O\left(\left(\frac{1}{\epsilon}\left(\frac{\mathcal{L}\left(u\right)+\epsilon}{\epsilon}\right)\mathcal{L}\left(u\right)\right.\right.\\
& +
\left.\left.\sqrt{\frac{1}{\epsilon}\left(\frac{\mathcal{L}\left(u\right)+\epsilon}{\epsilon}\right)\mathcal{L}\left(u\right)}+1\right)r^{2}\norm{u}^{2}\right)\\
\le &
O\left(\left(\left(\frac{\mathcal{L}\left(u\right)+\epsilon}{\epsilon}\right)^{2}-\left(\frac{\mathcal{L}\left(u\right)+\epsilon}{\epsilon}\right)\right.\right.\\
& \left. +
\sqrt{\left(\frac{\mathcal{L}\left(u\right)+\epsilon}{\epsilon}\right)^{2}-\left(\frac{\mathcal{L}\left(u\right)+\epsilon}{\epsilon}\right)}+1\right)\\
& \left. \cdot r^{2}\norm{u}^{2}\right)\\
\le &
O\left(\left(\frac{\mathcal{L}\left(u\right)+\epsilon}{\epsilon}\right)^{2}r^{2}\norm{u}^{2}\right)
\end{align*}
The number of mistakes $\abs{\mathcal{M}}$ is necessarily equal to the size of
the support set of the resulting classifier. Substituting this bound into the
number of iterations:
\begin{align*}
\mbox{\#K}= & n\abs{\mathcal{M}}\\
\le &
O\left(\left(\frac{\mathcal{L}\left(u\right)+\epsilon}{\epsilon}\right)^{3}\frac{r^{4}\norm{u}^{4}}{\epsilon}\right)
\end{align*}
This holds in expectation, but we can turn this into a high-probability result
using Markov's inequality, resulting in in a $\delta$-dependence of
$\frac{1}{\delta}$.
\end{proof}

Although this result has a $\delta$-dependence of $1/\delta$, this is merely a
relic of the simple online-to-batch conversion which we use in the analysis.
Using a more complex algorithm (e.g. \citet{CesaCoGe01}) would
likely improve this term to $\log\frac{1}{\delta}$.

\section{Convergence rates of dual optimization methods}\label{app:dual-decomposition}

In this section we discuss existing analyses of dual optimization
methods. We first underscore possible gaps between dual sub-optimality
and primal sub-optimality. Therefore, to relate existing analyzes in
the literature of the dual sub-optimality, we must find a way to
connect between the dual sub-optimality and primal sub-optimality. We
do so using a result due to \citet{ScovelHuSt08}, and based on this
result, we derive convergence rates on the primal sub-optimality.

Throughout this section, the ``SVM problem'' is taken to be the regularized
objective of Problem \ref{eq:regularized-objective}.  We denote the primal
objective by:
\begin{equation*}
P(w) = \frac{\lambda}{2} \|w\|^2 + \frac{1}{n} \sum_{i=1}^n
\ell(y_i\inner{w,x_i})
\end{equation*}
The dual objective can be written as:
\begin{equation*}
D(\alpha) = {\lambda} \left( \sum_{i=1}^n \alpha_i - \frac{1}{2} \sum_{i,j=1}^n \alpha_i
\alpha_j Q_{ij} \right)
\end{equation*}
where $Q_{ij} = y_i y_j \inner{x_i}{x_j}$, and the dual constraints
are $\alpha \in [0,1/(\lambda n)]^n$. Finally, by strong duality we have:
\begin{equation*}
P^* = \arg\max_w P(w) = 
\argmax{\alpha \in [0,1/(\lambda n)]^n} D(\alpha) = D^*
\end{equation*}

\subsection{Dual gap vs. Primal gap}

Several authors analyzed the convergence rate of dual optimization algorithms.
For example, \citet{HsiehChLiKeSu08,CollinsGlKoCaBa08} analyzed the convergence
rate of SDCA and \citet{chen2006study} analyzed the convergence rate of
SMO-type dual decomposition methods. In both cases, the number of iterations
required so that the dual sub-optimality will be at most $\epsilon$ is
analyzed. This is not satisfactory since our goal is to understand how many
iterations are required to achieve a \emph{primal} sub-optimality of at most
$\epsilon$.  Indeed, the following lemma shows that  a guarantee on a small
dual sub-optimality might yield a trivial guarantee on the primal
sub-optimality.
\medskip
\begin{lem}
\label{lem:psodso}
For every $\epsilon > 0$, there exists a SVM problem with a dual solution $\alpha$
that is $\epsilon$-accurate, while the corresponding primal solution, $w =
\sum_i \alpha_i y_i x_i$, is at least $(1-\epsilon)$ sub-optimal. Furthermore,
the distribution is such that there exists $u$ with $\|u\|=1$ and
$\mathcal{L}(u) = 0$, while $\mathcal{L}(w)=1$ and $\mathcal{L}_{0,1}(w) =
1/2$. 
\end{lem}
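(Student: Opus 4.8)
The plan is to exhibit a tiny explicit instance, with the regularization parameter $\lambda$ itself scaled down with $\epsilon$, for which an $\epsilon$-accurate dual point maps to a constant-order-bad primal vector. The one structural fact that both motivates and constrains the construction is that dual accuracy controls the primal vector only weakly when $\lambda$ is small: since $D(\alpha) = \lambda\mathbf{1}^{T}\alpha - \tfrac{\lambda}{2}\norm{w(\alpha)}^{2}$ is a concave quadratic in $\alpha$ with Hessian $-\lambda Q$ and $\alpha^{T}Q\alpha = \norm{w(\alpha)}^{2}$, first-order optimality of the constrained dual maximizer $\alpha^{*}$ together with a Taylor expansion gives $D^{*} - D(\alpha) \ge \tfrac{\lambda}{2}\norm{w(\alpha) - w^{*}}^{2}$, where $w^{*} = w(\alpha^{*})$ is the primal optimum. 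Hence $\epsilon$-dual-accuracy only forces $\norm{w(\alpha) - w^{*}} \le \sqrt{2\epsilon/\lambda}$, which is $\Theta(1)$ once $\lambda = \Theta(\epsilon)$; on a dataset of radius $\Theta(1)$ the hinge terms of $w(\alpha)$ and $w^{*}$ can then differ by $\Theta(1)$ while the regularizer $\tfrac{\lambda}{2}\norm{w}^{2}$ stays $\Theta(\epsilon)$, leaving exactly enough room for a $\Theta(1)$ primal gap. So I just need a concrete gadget hitting the stated numbers.

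For that I would take the distribution uniform over the two points $x_{1} = (1,2)$ with $y_{1} = +1$ and $x_{2} = (-1,2)$ with $y_{2} = -1$ in $\mathbb{R}^{2}$, and the training set to be exactly these two points (so empirical and population losses coincide). Then $u = (1,0)$ has $\norm{u} = 1$ and $y_{i}\inner{u}{x_{i}} = 1$ for both $i$, so $\mathcal{L}(u) = 0$. Since the reflection $w \mapsto (w_{1}, -w_{2})$ merely permutes the two margins and preserves $\norm{w}$, the primal optimum is of the form $(v,0)$ and reduces to $\min_{v\ge 0}\tfrac{\lambda}{2}v^{2} + \max(0,1-v)$, attained at $v = 1$ for $\lambda \le 1$, so $P^{*} = \lambda/2$ and, by strong duality, $D^{*} = \lambda/2$. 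Now take the dual point $\alpha = (1,2)$, feasible once $\lambda \le 1/4$ (the box is $[0,1/(2\lambda)]^{2}$), and note $w(\alpha) = 1\cdot y_{1}x_{1} + 2\cdot y_{2}x_{2} = (1,2) + (2,-4) = (3,-2)$; one computes $\norm{w(\alpha)}^{2} = 13$, $D(\alpha) = \lambda(3 - \tfrac{13}{2}) = -\tfrac{7\lambda}{2}$, hence $D^{*} - D(\alpha) = 4\lambda$, and $P(w(\alpha)) = \tfrac{13\lambda}{2} + \hat{\mathcal{L}}(w(\alpha))$ with margins $y_{1}\inner{w(\alpha)}{x_{1}} = -1$ and $y_{2}\inner{w(\alpha)}{x_{2}} = 7$, so $\hat{\mathcal{L}}(w(\alpha)) = \tfrac12(2+0) = 1$ and $P(w(\alpha)) - P^{*} = 6\lambda + 1$.

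To finish I would set $\lambda = \min(\tfrac14, \tfrac{\epsilon}{4})$: then $D^{*} - D(\alpha) = 4\lambda \le \epsilon$ while $P(w(\alpha)) - P^{*} = 6\lambda + 1 \ge 1 \ge 1 - \epsilon$, so $\alpha$ is $\epsilon$-accurate in the dual yet its primal image is at least $(1-\epsilon)$-suboptimal; moreover for $w = w(\alpha) = (3,-2)$ the margins $(-1,7)$ give $\mathcal{L}_{0/1}(w) = \tfrac12$ and $\mathcal{L}(w) = 1$, together with $\norm{u} = 1$ and $\mathcal{L}(u) = 0$ already noted, which is exactly the ``furthermore'' clause. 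The only real obstacle is conceptual rather than computational: one must recognize that the bound above forces the construction into the $\lambda \asymp \epsilon$ regime (for fixed $\lambda$ dual accuracy \emph{does} control primal accuracy), and then find a configuration that is still margin-$1$-separable yet for which the dual-reachable set $\{\sum_{i}\alpha_{i}y_{i}x_{i}\}$ reaches across the decision boundary to a vector with the prescribed losses; the two-point gadget above is about the simplest such choice, and after that every step is a one-line check.
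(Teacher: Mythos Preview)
Your construction is correct: every arithmetic check goes through, the symmetry argument pinning the optimum to the $w_{2}=0$ axis is valid, and the final choice $\lambda = \min(1/4,\epsilon/4)$ delivers exactly the claimed dual and primal gaps together with the loss values. The paper, however, takes a much shorter route: it observes that for \emph{any} margin-one-separable distribution, setting $\lambda = 2\epsilon$ gives $P^{*} \le P(u) = \lambda/2 = \epsilon$, so the trivial dual point $\alpha = 0$ already has $D^{*} - D(0) = P^{*} - 0 \le \epsilon$, while its primal image $w = 0$ has $P(0) = 1$ and hence primal gap $\ge 1-\epsilon$, with $\mathcal{L}(0)=1$ and $\mathcal{L}_{0,1}(0)=1/2$ under random tie-breaking. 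So the paper never needs to compute an optimum, a Gram matrix, or a nontrivial dual value; the entire lemma is a three-line consequence of $D(0)=0$ and $P(0)=1$. Your version does buy something, though: your opening paragraph gives a clean quantitative explanation (via the Hessian $-\lambda Q$) of \emph{why} the regime $\lambda \asymp \epsilon$ is forced, which the paper omits, and your nonzero $w=(3,-2)$ makes $\mathcal{L}_{0,1}(w)=1/2$ hold without any tie-breaking convention. But the actual gadget you build is more machinery than the statement needs.
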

\begin{proof}
Fix some $u$ with $\|u\|=1$ and choose any distribution such that
$\mathcal{L}(u) = 0$. Take a sample of size $n$ from this distribution. A
reasonable choice for the regularization parameter of SVM in this case is to
set $\lambda = 2\epsilon$. We have: $ P^* \leq P(u) = \tfrac{\lambda}{2}
\|u\|^2 = \epsilon$.  Now, for $\alpha = 0$ we have $ D^* - D(\alpha) = P^* - 0
\leq \epsilon$. Therefore, the dual sub-optimality of $\alpha=0$ is at most
$\epsilon$. On the other hand, the corresponding primal solution is $w = 0$,
which gives $ P(0) - P^* = 1 - P^* \geq 1 - \epsilon$. Furthermore,
$\mathcal{L}(0)=1$ and $\mathcal{L}_{0,1}(0) = 1/2$, assuming that we break
ties at random. 
\end{proof}

In an attempt to connect between dual and primal sub-optimality,
\citet{ScovelHuSt08} derived approximate duality theorems. This was used
by \citet[Theorem 2]{hush2006qp} to show the following:
\medskip
\begin{thm}
\label{thm:hush}
\emph{\citep[Theorem 2]{hush2006qp}}
To achieve $\epsilon_p$ sub-optimality in the primal, it suffices to require a
sub-optimality in the dual of $\epsilon \le \frac{\lambda\,\epsilon_p^2}{118}$.
\end{thm}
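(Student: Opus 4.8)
The plan is to convert the dual $\epsilon$-suboptimality of $\alpha$ into a bound on the Euclidean distance $\norm{w(\alpha)-w^{*}}$ between the reconstructed primal vector $w(\alpha)=\sum_i\alpha_i y_i x_i$ and the primal optimum $w^{*}$, and then to turn that distance bound into a bound on the primal gap $P(w(\alpha))-P^{*}$. The reason a square-root loss is inevitable (hence the $\epsilon_p^{2}$ on the right) is that $D$ is merely concave --- its Hessian $-\lambda Q$ can be singular --- so a dual accuracy cannot be transferred directly; instead one must route through the $\lambda$-strongly convex primal.

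First I would set up the saddle-point picture. Writing $\ell(t)=\max_{0\le\beta\le1}\beta(1-t)$ gives $P(w)=\max_{\beta\in[0,1]^n}L(w,\beta)$ with $L(w,\beta)=\tfrac{\lambda}{2}\norm{w}^{2}+\tfrac1n\sum_i\beta_i(1-y_i\inner{w}{x_i})$, and a one-line computation shows that for $\beta=\lambda n\alpha$ the minimizer of $L(\cdot,\beta)$ over $w$ is exactly $w(\alpha)$, with $\min_w L(w,\beta)=D(\alpha)$. This identifies $w(\alpha)$ as the primal point ``dual to'' $\alpha$ and is the bridge between the two problems.

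Next, since $w\mapsto L(w,\beta)$ is $\lambda$-strongly convex with minimizer $w(\alpha)$, evaluating it at $w^{*}$ yields
\[
\tfrac{\lambda}{2}\norm{w^{*}-w(\alpha)}^{2}\le L(w^{*},\beta)-L(w(\alpha),\beta)=L(w^{*},\beta)-D(\alpha)\le P^{*}-D(\alpha)\le\epsilon ,
\]
where the middle inequality uses $\beta_i(1-y_i\inner{w^{*}}{x_i})\le\ell(y_i\inner{w^{*}}{x_i})$ for $\beta_i\in[0,1]$, hence $L(w^{*},\beta)\le P(w^{*})=P^{*}=D^{*}$, and the last uses dual $\epsilon$-suboptimality together with strong duality. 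Thus $\norm{w(\alpha)-w^{*}}\le\sqrt{2\epsilon/\lambda}$. Finally, by convexity of $P$, for any subgradient $g\in\partial P(w(\alpha))$ one has $P(w(\alpha))-P^{*}\le\inner{g}{w(\alpha)-w^{*}}\le\norm{g}\,\norm{w(\alpha)-w^{*}}$, and $\norm{g}=O(1)$ because the loss term of $P$ is $\bigl(\tfrac1n\sum_i\norm{x_i}\bigr)$-Lipschitz while dual feasibility forces $\norm{w(\alpha)}=\norm{\sum_i\alpha_i y_i x_i}\le\tfrac1\lambda\max_i\norm{x_i}$. Under the boundedness assumption on the kernel this gives $P(w(\alpha))-P^{*}=O(\sqrt{\epsilon/\lambda})$, so $\epsilon\le c\,\lambda\,\epsilon_p^{2}$ suffices for an explicit absolute constant $c$; pushing the constants through the approximate-duality estimates of \citet{ScovelHuSt08}, as done in \citet{hush2006qp}, gives the stated $c=1/118$.

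The main obstacle is conceptual rather than computational: one must resist arguing directly in the dual (where strong concavity may fail) and instead go through the strongly convex primal via the reconstruction map $w(\alpha)$, and getting the identity $\min_w L(w,\lambda n\alpha)=D(\alpha)$ and the inequality $L(w^{*},\lambda n\alpha)\le P^{*}$ exactly right is where care is needed. A secondary wrinkle is that $P$ is not globally Lipschitz because of the quadratic term, so the distance-to-gap conversion must use a subgradient at $w(\alpha)$ together with the dual-feasibility bound on $\norm{w(\alpha)}$ rather than a uniform Lipschitz constant.
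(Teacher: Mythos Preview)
The paper does not actually prove this theorem; it is quoted verbatim from \citet[Theorem~2]{hush2006qp}, who in turn rely on the approximate-duality machinery of \citet{ScovelHuSt08}. So there is no in-paper argument to compare against.

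Your argument is correct and is in fact a cleaner, self-contained route than deferring to that machinery. The saddle-point identity $\min_w L(w,\lambda n\alpha)=D(\alpha)$ with minimizer $w(\alpha)$ is right, the $\lambda$-strong-convexity step giving $\tfrac{\lambda}{2}\norm{w^*-w(\alpha)}^2\le P^*-D(\alpha)\le\epsilon$ is right, and the subgradient bound $\norm{g}\le\lambda\norm{w(\alpha)}+\max_i\norm{x_i}\le 2r$ (using dual feasibility $\alpha_i\le 1/(\lambda n)$ to get $\norm{w(\alpha)}\le r/\lambda$) is right. Putting these together yields $P(w(\alpha))-P^*\le 2r\sqrt{2\epsilon/\lambda}$, so $\epsilon\le\lambda\epsilon_p^2/(8r^2)$ suffices. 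Under the paper's normalization $K(x,x)\le 1$ this is $\epsilon\le\lambda\epsilon_p^2/8$, which is \emph{sharper} than the cited $1/118$.

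Your last sentence, appealing to \citet{ScovelHuSt08} to extract the constant $1/118$, is therefore unnecessary and slightly misleading: your own computation already produces an explicit constant, and a better one. The $1/118$ in \citet{hush2006qp} presumably reflects a different normalization or looser bookkeeping in a more general framework; you should simply state your constant and note that it implies the cited bound.
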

There is no contradiction to Lemma \ref{lem:psodso} above since in the proof of
the lemma we set $\lambda = 2 \epsilon$, which yields $\epsilon_p \ge 1$.
 
\subsection{Analyzing the primal sub-optimality of dual methods}

\citet{chen2006study} derived the linear convergence of SMO-type algorithms.
However, the analysis takes the following form:
\begin{quote}
There are $c<1$ and $\bar{k}$, such that for all $k \ge \bar{k}$ it holds that
$D(\alpha^{(k+1)})-D^* \le c(D(\alpha^{(k)})-D^*)$. 
\end{quote}
In the above, $\alpha^{(k)}$ is the dual solution after performing $k$
iterations, and $D^*$ is the optimal dual solution. 

This type of analysis is not satisfactory since $\bar{k}$ can be extremely
large and $c$ can be extremely close to $1$. As an extreme example, suppose
that $\bar{k}$ is exponential in $n$. Then, in any practical implementation of
the method, we will never reach the regime in which the linear convergence
result holds. As a less extreme example, suppose that $\bar{k} \ge n^2$. It
follows that we might need to calculate the entire Gram matrix before the
linear convergence analysis kicks in. To make more satisfactory statements, we
therefore seek convergence analyses which demonstrate good performance not only
asymptotically, but also for reasonably small values of $k$.

\citet{hush2006qp} combined explicit convergence rate analysis of the dual
sub-optimality of certain decomposition methods with Theorem \ref{thm:hush}.
The end result is an algorithm with a bound of $O(n)$ on the number of dual
iterations, and a total number of kernel evaluations at training time of
$O(n^2)$. It also follows that the number of support vectors can be order of
$n$. 

\citet{HsiehChLiKeSu08} analyzed the convergence rate of SDCA and derived a
bound on the duality sub-optimality after performing $T$ iterations.
Translating their results to our notation and ignoring low order terms we
obtain:
\begin{equation*}
\epsilon_D \le \frac{n}{T+n} \left( (\lambda/2) \|\alpha^*\|^2 + P^*\right) ~.
\end{equation*}
where $\alpha^*$ is such that $w^* = \sum_i \alpha^*_i y_i x_i$.
Combining this with Theorem \ref{thm:hush} yields that the number of
iterations, according to this analysis, should be at least
\begin{equation*}
T \ge \Omega\left(\frac{n P^*}{\lambda \epsilon_P^2} \right)~.
\end{equation*}
So, even if we set $\epsilon_P = P^*$ we still need
\begin{equation*}
T \ge \Omega\left(\frac{n}{\lambda \epsilon_P} \right)~.
\end{equation*}
Each iteration of SDCA cost roughly the same as a single iteration of Pegasos.
However, Pegasos needs order of $1/(\lambda \epsilon_P)$ iterations, while
according to the analysis above, SDCA requires factor of $n$ more iterations.
We suspect that this analysis is not tight.

\end{document}